\newcommand{\cmark}{\ding{51}}%
\newcommand{\xmark}{\ding{55}}
\theoremstyle{plain}
\newtheorem{theorem}{Theorem}[section]
\newtheorem{lemma}[theorem]{Lemma}
\newtheorem{proposition}[theorem]{Proposition}
\theoremstyle{definition}
\newtheorem{definition}[theorem]{Definition}
\theoremstyle{remark}
\ificcvfinal\pagestyle{empty}\fi
\begin{document}

\title{Manifold Matching via Deep Metric Learning for Generative Modeling}

\author{Mengyu Dai\thanks{Equal contributions.}\\
Microsoft\\
{\tt\small mendai@microsoft.com}
\and
Haibin Hang\footnotemark[1]\\
University of Delaware\\
{\tt\small haibin@udel.edu}
}

\maketitle
\ificcvfinal\thispagestyle{empty}\fi

\begin{abstract}
We propose a manifold matching approach to generative models which includes a distribution generator (or data generator) and a metric generator. 
In our framework, we view the real data set as some manifold embedded in a high-dimensional Euclidean space.
The distribution generator aims at generating samples that follow some distribution condensed around the real data manifold. 
It is achieved by matching two sets of points using their geometric shape descriptors, such as centroid and $p$-diameter, with learned distance metric; 
the metric generator utilizes both real data and generated samples to learn a distance metric which is close to some intrinsic geodesic distance on the real data manifold. The produced distance metric is further used for manifold matching.
The two networks learn simultaneously during the training process.
We apply the approach on both unsupervised and supervised learning tasks: in unconditional image generation task, the proposed method obtains competitive results compared with existing generative models; in super-resolution task, we incorporate the framework in perception-based models and improve visual qualities by producing samples with more natural textures. Experiments and analysis demonstrate the feasibility and effectiveness of the proposed framework.
\end{abstract}

\section{Introduction}\label{sec:introduction}
Deep generative models including Variational Autoencoder (VAE) \cite{VAE}, Generative Adversarial Networks (GAN) \cite{goodfellow2014generative} and their variants \cite{wgan2017, li2017mmd, Lee_2019_CVPR, Rolinek_2019_CVPR, Ding_2020_CVPR, zhang2021image, Rao_2020_CVPR} have achieved great success in generative tasks such as image and video synthesis, super-resolution (SR), image-to-image translation, text generation, neural rendering, etc. 
The above approaches try to generate samples which mimic real data by minimizing various discrepancies between their corresponding statistical distributions, such as using KL divergence \cite{VAE}, Jensen-Shannon divergence \cite{goodfellow2014generative}, Wasserstein distance \cite{wgan2017}, Maximum Mean Discrepancy \cite{li2017mmd} and so on. 
These approaches focused on the data distribution aspect and did not pay enough attention to the underlying metrics of these distributions. 
The interplay between distribution measure and its underlying metric is a central topic in optimal transport (cf.~\cite{villani2008optimal}). 
Despite that researchers have successfully employed optimal transport theory in generative models \cite{wgan2017,Wu_2019_CVPR,Deshpande_2019_CVPR}, 
simply assuming the underlying metric to be Euclidean metric may neglect some rich information lying in the data  \cite{10.5555/645504.656414}.
In addition, 
although the above approaches are validated to be effective, successful training setups are mostly based on empirical observations and lack of physical interpretations. 

In this paper we bring up a geometric perspective which serves as an important parallel view of generative models as GANs. Table~\ref{tab:comp_GAN_MM} summarizes the main differences between classic GANs and our proposed (so-called MvM) framework. 
Instead of directly matching statistical discrepancies under Euclidean distance, we provide a more flexible framework which is built upon learning the intrinsic distances among data points.
Specifically, we treat the real data set as some manifold embedded in high-dimensional Euclidean space, and generate a fake distribution measure condensed around the real data manifold by optimizing a \emph{Manifold Matching (MM)} objective. The MM objective is built on shape descriptors, such as centroid and $p$-diameter with respect to some proper metric learnt by a metric generator using \emph{Metric Learning (ML)} approaches. During training process, the (fake data) distribution generator and the metric generator work interchangeably and produces better distribution (metric) that facilitates the efficient training of metric (distribution) generator.
The learned distances can not only be used to formulate energy-based loss functions \cite{LeCun06atutorial} for MM, but can also reveal meaningful geometric structures of real data manifold. 
\begin{table}[ht]
    \scriptsize     
	\centering
	\caption{Main differences between GANs and MvM.}
	\label{tab:comp_GAN_MM}
	\begin{tabular}{c|cc}
	    \hline
	    Differences & GANs & MvM \\
	    \hline
	    Main point of view & statistics & geometry 
	    \\
		\hline
		Matching terms & \makecell{means, moments, etc.} & \makecell{centroids, $p$-diameters}  \\
		\hline
		Matching criteria & \makecell{statistical discrepancy} & 
		\makecell{learned distances}
		\\
		\hline
		\makecell{Underlying metric} & \makecell{default Euclidean} & \makecell{learned intrinsic} 
		\\
		\hline
		\makecell{Objective functions} & \makecell{one min-max value function} & \makecell{two distinct objectives } 
		\\
		\hline
	\end{tabular}
    \vspace{-1em}
\end{table}

We apply the proposed framework on two tasks: unconditional image generation and single image super-resolution (SISR). 
We utilize unconditional image generation task as a validation of the feasibility of the approach; and further implement a supervised version of the framework on SISR task to demonstrate its advantage. 
Our main contributions are:
(1) We propose a manifold matching approach for generative modeling, which matches geometric descriptors between real and generated data sets using distances learned during training;
(2) We provide a flexible framework for modeling data and building objectives, where each generator has its own designated objective function; 
(3) We conduct experiments on unconditional image generation task and SISR task which validates the effectiveness of the proposed framework. 

\section{Related Work}
\noindent 
{\bf Manifold Matching:} 
Shen {\etal}~\cite{Shen2017ManifoldMU} proposed a nonlinear manifold matching algorithm using shortest-path distance and joint neighborhood selection, and illustrated its usage in medical imaging applications. 
Priebe {\etal} \cite{Priebe10manifoldmatching} investigated in manifold matching task from the perspective of
jointly optimizing the fidelity and commensurability, with an application in document matching. 
Lim and Ye \cite{lim2017geometric} decomposed GAN training into three geometric steps and used SVM separating hyperplane that has the maximal margins between classes.
Lei {\etal} \cite{lei2017geometric} showed the intrinsic relations between optimal transportation and convex geometry, and further used it to analyze generative models.
Genevay {\etal} \cite{pmlr-v84-genevay18a} introduced the Sinkhorn loss in generative models, based on regularized
optimal transport with an entropy penalty.
Shao {\etal} \cite{Shao_2018_CVPR_Workshops} introduced ways of exploring the Riemannian geometry of manifolds learned by 
generative models, and showed that the manifolds learned by deep generative models are close to zero curvature.
Park {\etal}~\cite{park2018mmgan} added a manifold matching loss in GAN objectives which tried to match distributions
using kernel tricks. However, the learning process highly relies on optimizing objectives in the original GAN framework. In addition, without proper metrics, using pre-defined kernels may fail to match the true shapes of data manifolds.
In our work, the manifold matching is implemented using geometric descriptors under proper metrics learned by a metric generator.

\noindent 
{\bf Deep Metric Learning:} Among rich sources of literature on deep metric learning, we mainly focus on a few that are related to our work. Xing {\etal} \cite{xing2003distance} first proposed distance metric learning with applications to improve clustering performance. Hoffer and Ailon implemented deep metric learning with Triplet network \cite{hoffer2015deep} which aimed to learn useful representations through distance comparisons. 
Duan {\etal} \cite{Duan_2018_CVPR} proposed a deep adversarial metric learning framework to generate synthetic hard negatives from negative samples. The hard negative generator and feature embedding were trained simultaneously to learn more precise distance metrics. The metrics were learned in a supervised fashion and then used in classification tasks. Unlike \cite{Duan_2018_CVPR}, our approach utilizes geometric descriptors for matching data manifolds to generate data without using any labelled information.
Mohan {\etal} \cite{Mohan_2020_CVPR} proposed a direction regularization method which tried to improve the representation space being learnt by guiding the pairs move towards right directions in the metric space. 
In this work, we utilize the approach in \cite{Mohan_2020_CVPR} for metric learning implementation, 

\noindent 
{\bf Perception-Based SISR:} SISR aims to recover a high-resolution (HR) image from a low-resolution (LR) one.
Ledig {\etal} \cite{ledig2017photo} first incorporated adversarial component in their objective and achieved high perceptual quality. However, SRGAN can generate observable artifacts such as undesirable noise and whitening effect. Similar issues were also mentioned in Sajjadi {\etal} \cite{sajjadi2017enhancenet}. Wang {\etal} \cite{wang2018esrgan} proposed ESRGAN which improved perceptual quality by improving SRGAN network architecture and combining PSNR-oriented network and a GAN-based network to balance perceptual quality and fidelity. 
Ma {\etal} \cite{Ma_2020_CVPR} proposed a gradient branch which provides additional structure priors for the SR process. Utilization of the gradient branch need corresponding network architectures to be equipped with. 
Soh {\etal} \cite{Soh_2019_CVPR} introduced natural manifold discriminator which tried to distinguish real and generated noisy and blurry samples. 
Since the natural manifold discriminator focuses on classifying certain types of manually generated fake data, one following question is: can we find a more robust way to learn useful information from real data?
Thus in this case we view one usage of our work in SR task as an extension of the natural manifold discriminator. 
Some other recent methods \cite{Lim_2017_CVPR_Workshops,Cai_2019_ICCV,Qiu_2019_ICCV,Liu_2020_CVPR,Guo_2020_CVPR,Bhat_2021_CVPR} mainly work on improving network architectures which are not directly comparable to our approach.
In this paper we focus on objectives regardless of generator architectures, while the method can be incorporated into existing works. 



\section{Methodology}
\subsection{Proposed Framework}
We propose a metric measure framework for generative modeling which contains a distribution generator $f_\theta:\mathbb{R}^m\rightarrow\mathbb{R}^D$ and a metric generator $g_w:\mathbb{R}^D\rightarrow\mathbb{R}^n$.

$$\mathbb{R}^m\xrightarrow{\hspace*{0.3cm} f_\theta \hspace*{0.3cm}}\mathbb{R}^D\xrightarrow{\hspace*{0.3cm} g_w \hspace*{0.3cm}}\mathbb{R}^n$$
The metric generator $g_w$ would produce some metric $d$ on $\mathbb{R}^D$ to be the pullback of the Euclidean metric $d_E$ on $\mathbb{R}^n$ (see Definition~\ref{D:pullback_metric}).
The distribution generator $f_\theta$ would produce some measure $\mu$ on $\mathbb{R}^D$ to be the pushforward of some prior distribution $\nu$ 
on $\mathbb{R}^m$ (see Definition~\ref{D:pushforward_measure}). 
In implementations, $m, D, n$ represent dimensions of the input variable, target image, and image embedding respectively.
 
Now we have a metric measure space $(\mathbb{R}^D,d,\mu)$. The space of real data is viewed as some manifold $M\subseteq\mathbb{R}^D$ embedded in Euclidean space. 
The measure $\mu$ is said to be condensed around manifold $M$ if the majority of the measure $\mu$ is distributed nearby $M$ (see Fig.~\ref{F:condensed}).
The manifold $M$ is called totally geodesic (or ``straight'') with respect to metric $d$ if for any two points $a,b\in M$, the shortest path measured by metric $d$ stays on $M$ (see Fig.~\ref{F:straight}).
Using the generators $f_\theta$, $g_w$ modeled as neural networks, we aim to find proper parameters $\theta$ and $w$ such that the induced measure $\mu$ and metric $d$ satisfies:\\
(1) $\mu$ is as condensed as possible around $M$;\\
(2) $M$ is as ``straight'' as possible under $d$.

\begin{figure}[ht]
    \centering
    \begin{tabular}{c c}
     \includegraphics[scale=0.12]{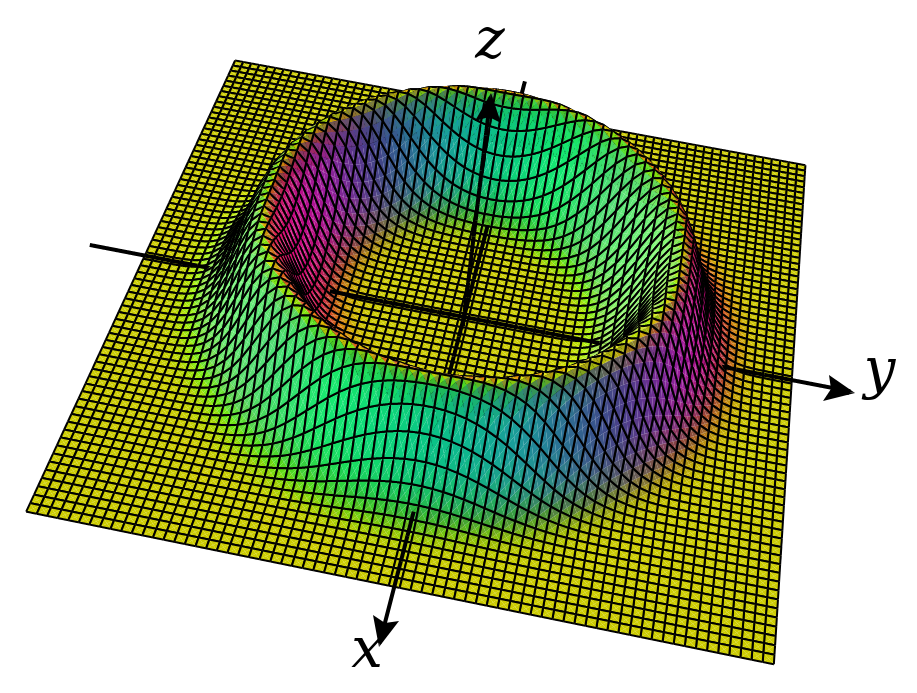}&
     \includegraphics[width=1.4in]{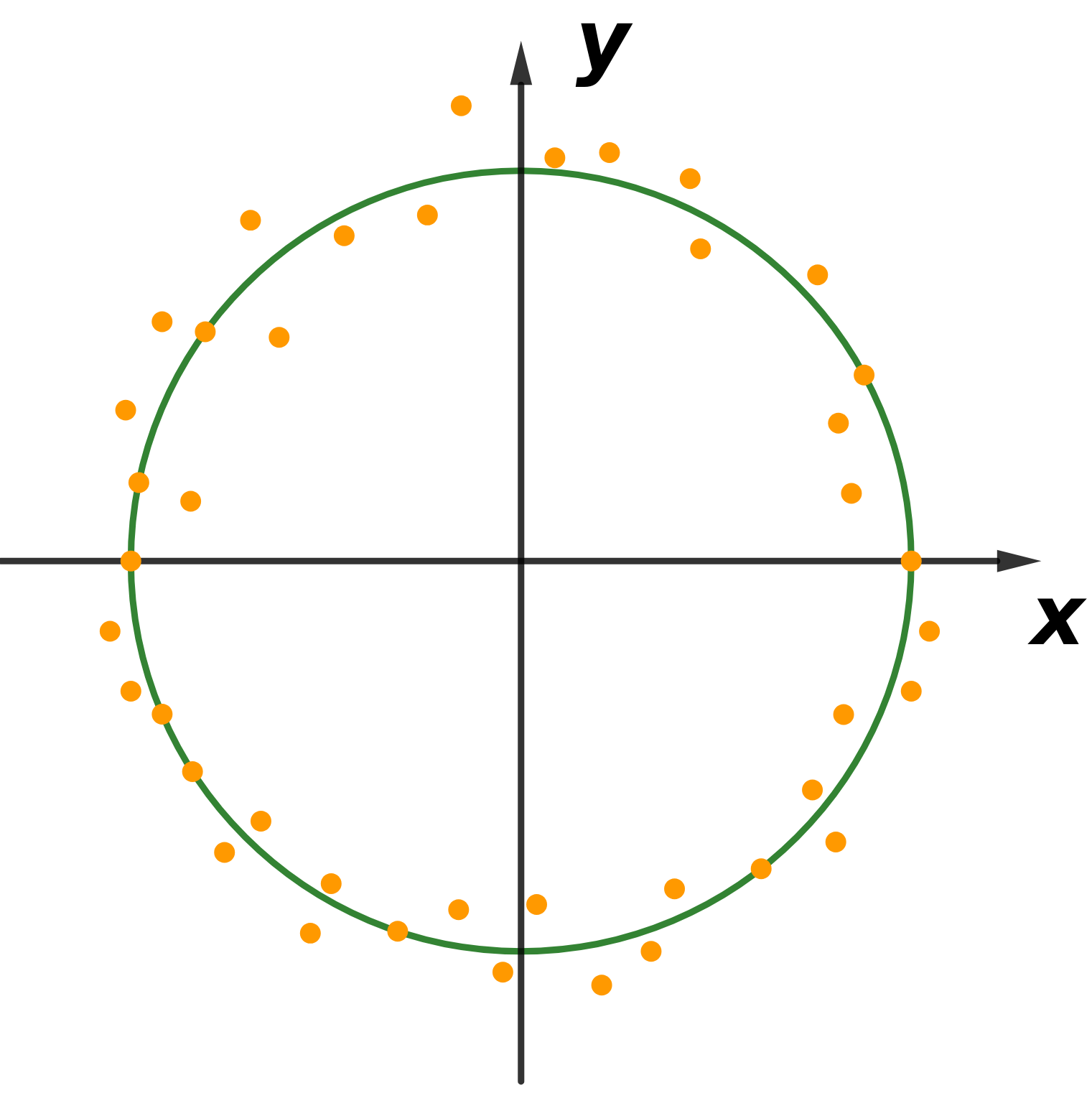}\\
     (i) & (ii)
    \end{tabular}
    \caption{(i) The probability density function of a distribution $\mu$ which condensed around a circle (manifold) $M\subseteq \mathbb{R}^2$; (ii) The orange dots represents random samples of $\mu$ and the green circle represents the real data manifold $M\subseteq\mathbb{R}^2$. }
    \label{F:condensed}
\end{figure}
\begin{figure}[ht]
    \centering
    \begin{tabular}{c c}
         \includegraphics[scale=0.105]{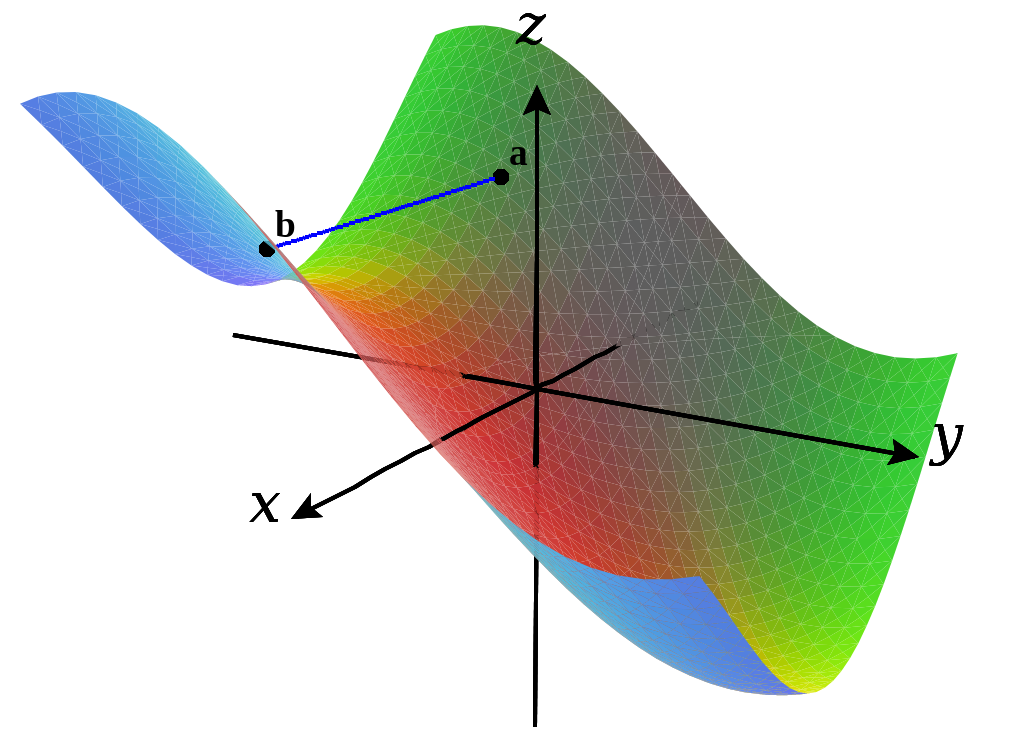}&\includegraphics[scale=0.105]{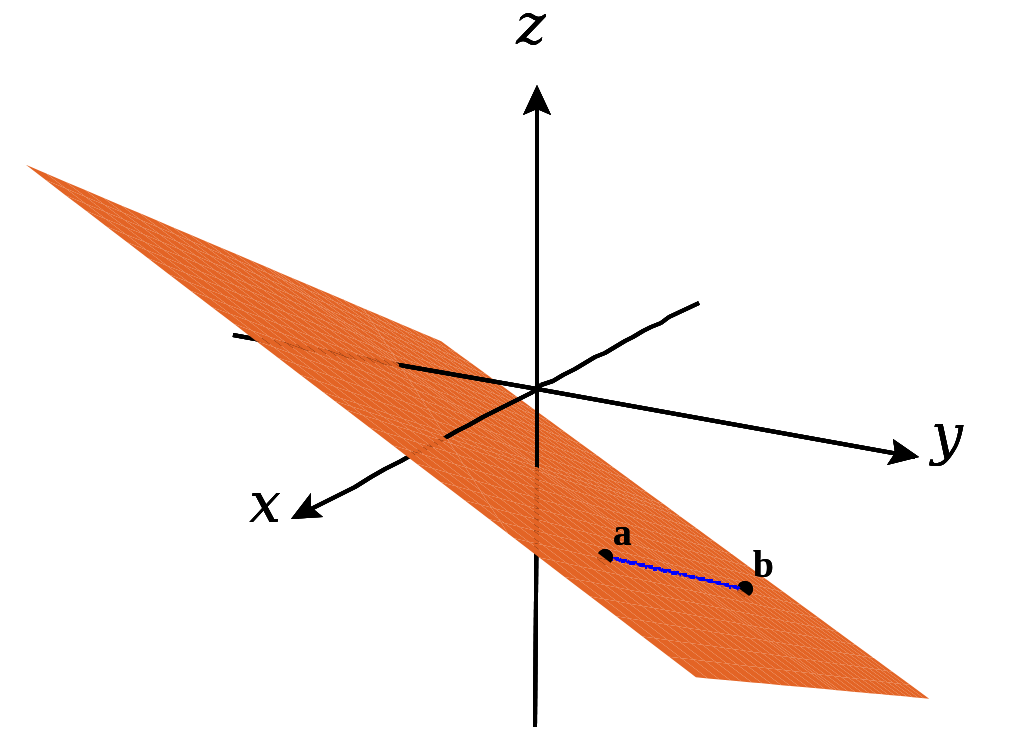}  \\
         (i) & (ii) 
    \end{tabular}
    \caption{The blue segment represents the shortest path between two points $a,b$ with respect to Euclidean distance $d_E$. (i) A non-geodesic sub-manifold of $\mathbb{R}^3$ under $d_E$; (ii) A geodesic sub-manifold of $\mathbb{R}^3$ under $d_E$.}
    \label{F:straight}
\end{figure}
Generally speaking, the two networks would produce a sequence of metrics $\{d_{2i}\}_{i\geq 0}$ and a sequence of measures $\{\mu_{2i+1}\}_{i\geq 0}$ inductively and alternatively as follows: (i) Let $d_0 = d_E$; Then for any $i>0$, 
(ii) derive measure $\mu_{2i-1}$ using manifold matching based on metric $d_{2i-2}$; 
(iii) derive metric $d_{2i}$ using metric learning based on measure $\mu_{2i-1}$.
$$d_0=d_E \rightsquigarrow \mu_1 \rightsquigarrow d_2 \rightsquigarrow \mu_3 \rightsquigarrow d_4 \rightsquigarrow \cdots$$      

In the following we introduce how to implement manifold matching and metric learning in details.

\subsection{Manifold Matching}
Let $\mathcal{P}(X)$ be the set of all probability measures on space $X$. Let $\mathcal{D}(X)$ be the set of all metrics on space $X$. 
\begin{definition}[Pushforward measure]\label{D:pushforward_measure}
Given a map $f:X\rightarrow Y$ and a probability measure $\mu\in \mathcal{P}(X)$, the push forward measure $f_\ast(\mu)\in\mathcal{P}(Y)$ is defined as: for any measurable set $A\subseteq Y$,
$$(f_\ast \mu)(A):=\mu(f^{-1}(A)).$$
\end{definition}

\begin{definition}[Pullback metric]\label{D:pullback_metric}
Given a map $g:Y\rightarrow Z$ and a metric $d\in \mathcal{D}(Z)$, the pull back metric $g^\ast(d)\in\mathcal{D}(Y)$ is defined as: for any $y_1,y_2\in Y$,
$$(g^\ast d)(y_1,y_2):=d(g(y_1),g(y_2)).$$
\end{definition}

Manifold matching in our work refers to finding parameter $\theta_0$ of a specific generative network $f_\theta:\mathbb{R}^m\rightarrow\mathbb{R}^D$ such that the pushforward $(f_{\theta_0})_\ast\nu$ of some prior distribution $\nu$ via $f_{\theta_0}$ is condensed around a manifold $M\subseteq\mathbb{R}^D$.
In our case, the real data manifold $M\subseteq \mathbb{R}^D$ generally has no explicit expression. In other words, given a point $a\in\mathbb{R}^D$ these is no way to explicitly tell whether $a\in M$ or how far away $a$ is from $M$. For this reason, we attempt to estimate the shape of $M$ via a set of sample points from $M$.

The centroid of a space is an important descriptor of its shape. 
For a metric measure space, the Fréchet mean (cf.\,\cite{grove1973conjugatec,bhattacharya2003large,arnaudon2013medians}) is a natural generalization of the centroid: 
\begin{definition}[Fréchet mean]\label{D:frechet}
The Fréchet mean set $\sigma(\mathcal{X})$ of a metric measure space $\mathcal{X}=(X,d,\mu)$ is defined as 
$$\underset{x\in X}{\operatorname{arg\,min}}\int_X d^2(x,y)d\mu(y).$$
\end{definition}

The Fréchet mean roughly informs the center of $\mathcal{X}$, but to reach the goal of manifold matching, we also need a shape descriptor indicating the size of $\mathcal{X}$. Hence we introduce the notion of $p$-diameter \cite{memoli2011gromov}: 
\begin{definition}[$p$-diameter]
For any $p\geq 1$, the $p$-diameter of metric measure space $\mathcal{X}:=(X,d,\mu)$ is defined as 
$$\operatorname{diam}_p(\mathcal{X}):=\left(\int_X\int_X d(x,x')^p d\mu(x)d\mu(x')\right)^{1/p}.$$
\end{definition}

The above definitions of Fréchet mean and $p$-diameter are for metric measure spaces, but it also applies to any manifold assuming a uniform volume measure on it. 
Let $S:=\{x_1,x_2,\cdots,x_k\}$ be a sequence of independent identically distributed points sampled from $\mu$. Let $\mu_k= \frac{1}{k}\Sigma_{i=1}^k\delta_{x_i}$ denote the empirical measure. We can estimate the shape of $(X,d,\mu)$ by the shape of $(S,d|_S,\mu_k)$. In the following we simply denote $\sigma(S):=\sigma(S,d|_S,\mu_k)$ and $\operatorname{diam}_p(S):=\operatorname{diam}_p(S,d|_S,\mu_k)$.



When to estimate the $p$-diameter, if $p$ is relatively large, $\operatorname{diam}_p(S)$ is very sensitive to the outliers. To ensure a trustworthy estimation, we choose $p=2$ and use $\operatorname{diam}_2$ as a size indicator.

Let $S_R$ be a set of random real data samples of $M$ and $S_F$ be a set of random fake data samples of $\mu=(f_\theta)_\ast\nu$.
Then the objective function we propose for manifold matching is:
\begin{align}
\label{eqn:MM}
L_{MM} := & d\big(\sigma(S_R),\sigma(S_F)\big)+\nonumber\\
& \lambda\big|\operatorname{diam}_2(S_R)-\operatorname{diam}_2(S_F)\big|,
\end{align}
where $\lambda$ is a weight parameter.

\subsection{Metric Learning}
Shape descriptors for manifold matching greatly rely on a proper choice of metric $d$. Although in most cases Euclidean metric $d_E$ is easy to access, it may not be an intrinsic choice and barely reveals the actual shape of a data set. 
The intrinsic metric on a Riemannian manifold $M$ is specified by geodesic distance. Specifically, the geodesic distance between two points on the manifold equals the length of shortest path on $M$ which connects them. From this point of view, a better choice of metric on the ambient space $\mathbb{R}^D\supseteq M$ should make the shortest path connecting $a,b\in M$ stay as close as possible to $M$, or in other words, make $M$ as ``straight'' as possible.
Here we apply Triplet metric learning to learn a proper metric on $\mathbb{R}^D$:
\begin{figure}[t]
    \centering
    \begin{tabular}{c c}
        \includegraphics[scale=0.11]{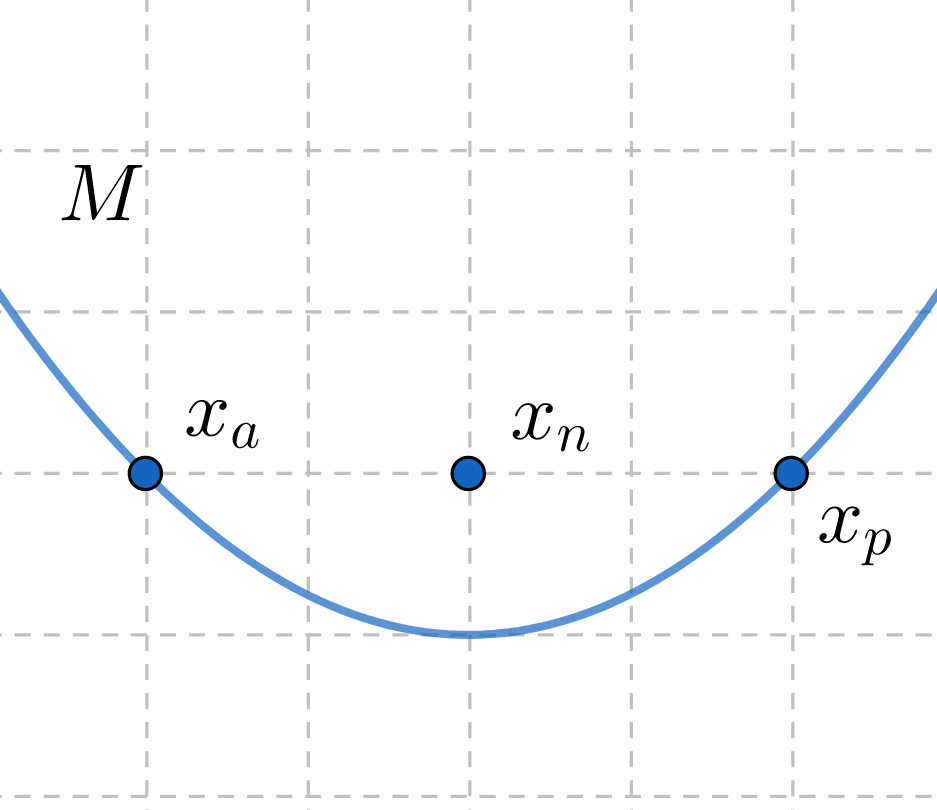}& \includegraphics[scale=0.11]{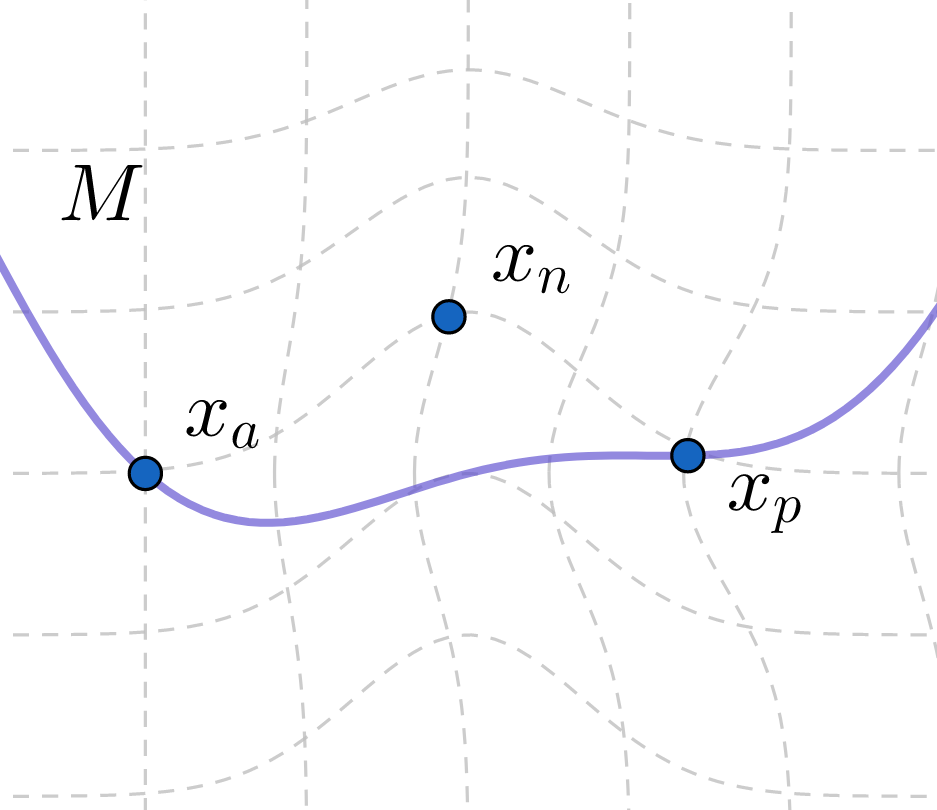}\\
    \end{tabular}
    \caption{Minimizing the Triplet loss pushes out negative sample $x_n$ and pulls back positive sample $x_p$. As a result, the learned metric would ``distort'' the space and ``straighten'' the manifold.
    }
    \label{F:distortion}
    \vspace{-1em}
\end{figure}
\begin{definition}
Given a triple $(x_a,x_p,x_n)$ with $x_a,x_p\in M$ and $x_n\notin M$, the Triplet loss is defined as
$$L_{tri}:=\max\{0, d^2(x_a,x_p)-d^2(x_a,x_n)+\alpha\}.$$
\end{definition}
Here $d=(g_w)^\ast d_E$ and $\alpha$ is a margin parameter.
People usually call $x_a$ an anchor sample, $x_p$ a positive sample and $x_n$ a negative sample.
\begin{figure*}[ht]
	\centering  
	\includegraphics[width=6.8in, height=1.8in]{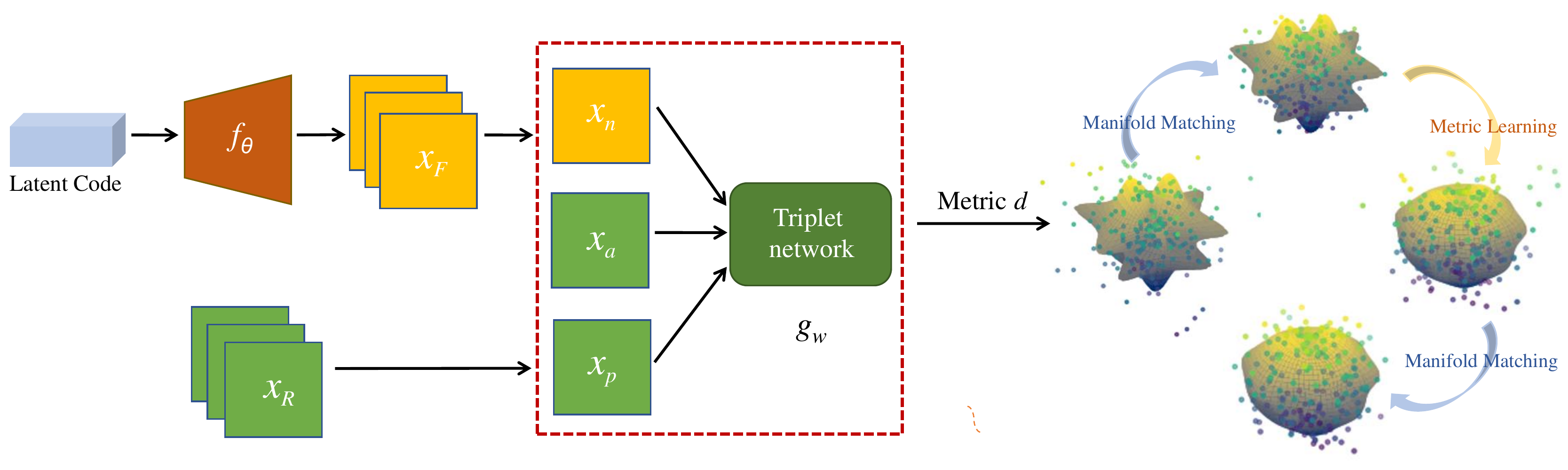}
	\caption{{\bf Implementation pipeline of the proposed MvM framework}. $x_R$ and $x_F$ represent samples in real data set $S_R$ and generated data set $S_F$, respectively.
	The distribution generator $f_\theta$ outputs samples $x_F \in S_F$ based on manifold matching criteria under learned metric. For Triplet metric learning, anchor samples $x_a$ and positive samples $x_p$ are randomly selected from $S_R$, and negative samples $x_n$ are randomly selected from $S_F$, without labelled data involved in this step. Learned distance metric is then used for manifold matching.
	Manifold matching step makes the fake samples (dots) condense around the real data manifold (surface), while metric learning step tries to ``straighen'' or ``flatten'' the real data manifold. These two steps goes interchangeably until convergence.} 
	\label{fig:network}
\end{figure*}
By minimizing $L_{tri}$, we attempt to pull back the positive sample to anchor and push out the negative sample, only when $d(x_a,x_p)$ is relatively larger than $d(x_a,x_n)$. Fig.~\ref{F:distortion} illustrates how this would ``straighten'' the manifold.


Among numerous methods for metric learning, in our implementation we choose one recent approach~\cite{Mohan_2020_CVPR} which adapted Triplet loss by adding a direction regularizer to make the metric learned towards right direction. Hence in our paper the Triplet loss becomes:
\begin{align}
\label{eqn:apn}
L_{apn} &= \max\{0, d^2(x_a,x_p)-d^2(x_a,x_n) + \alpha - \nonumber\\ 
&\gamma Cos(g_w(x_n) - g_w(x_a), g_w(x_p) - g_w(x_a))\},
\end{align}
where $\gamma$ is the direction guidance parameter which controls the magnitude of regularization applied to the original Triplet loss $L_{tri}$. In practice one can also employ other methods to learn proper distance metrics.


\subsection{Objective Functions}




In the metric learning community, the metric generator $g_w$ is usually viewed as a metric embedding. From this point of view we have (see supplement for the proof):
\begin{proposition}
Given two measure $\mu_1$ and $\mu_2$ on the same metric space $(X,d)$, where $d=g^\ast d_E$. Then $d\left(\sigma(X,d,\mu_1),\sigma(X,d,\mu_2)\right)=d_E(\overline{g_\ast\mu_1},\overline{g_\ast\mu_2})$.
\end{proposition}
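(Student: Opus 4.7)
The plan is to reduce everything, via the pullback definition of $d$, to a variance computation in $\mathbb{R}^n$ under Euclidean distance, where the Fréchet mean is classically just the usual (Lebesgue) mean. Concretely, for any fixed $x\in X$, using $d(x,y)=d_E(g(x),g(y))$ and the change-of-variables formula for the pushforward,
\begin{equation*}
\int_X d^2(x,y)\,d\mu(y)=\int_X d_E^2(g(x),g(y))\,d\mu(y)=\int_{\mathbb{R}^n}\|g(x)-z\|^2\,d(g_\ast\mu)(z).
\end{equation*}

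Next, I would invoke the standard variance decomposition in Euclidean space: for any $u\in\mathbb{R}^n$ and any probability measure $\nu$ with mean $\bar\nu:=\int z\,d\nu(z)$,
\begin{equation*}
\int_{\mathbb{R}^n}\|u-z\|^2\,d\nu(z)=\|u-\bar\nu\|^2+\int_{\mathbb{R}^n}\|z-\bar\nu\|^2\,d\nu(z),
\end{equation*}
so that applying this with $u=g(x)$ and $\nu=g_\ast\mu$ gives an expression whose only $x$-dependence is $\|g(x)-\overline{g_\ast\mu}\|^2$. Minimizing over $x\in X$ therefore characterizes $\sigma(X,d,\mu)$ as the set of $x$ for which $g(x)$ is as close as possible (in $d_E$) to $\overline{g_\ast\mu}$. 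Assuming $\overline{g_\ast\mu}$ lies in the image $g(X)$—so that the minimum value $0$ is attained—this says that every element $x\in\sigma(X,d,\mu)$ satisfies $g(x)=\overline{g_\ast\mu}$.

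Finally, picking any representatives $x_i\in\sigma(X,d,\mu_i)$ for $i=1,2$ and reapplying the pullback definition,
\begin{equation*}
d(x_1,x_2)=d_E(g(x_1),g(x_2))=d_E\bigl(\overline{g_\ast\mu_1},\overline{g_\ast\mu_2}\bigr),
\end{equation*}
which is independent of the chosen representatives and thus yields the claimed identity.

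The main obstacle I expect is not the computation, which is essentially one line once the variance identity is invoked, but the well-posedness of the statement: the Fréchet mean $\sigma$ is a \emph{set} rather than a point, and the equality requires that the Euclidean centroid $\overline{g_\ast\mu_i}$ actually lies in $g(X)$ so that the infimum $\inf_{x\in X}\|g(x)-\overline{g_\ast\mu_i}\|$ equals zero. Under a mild surjectivity or reachability assumption on $g$ (reasonable in the paper's neural-network setting where $D\ge n$), this is automatic; otherwise one would only get the inequality $d(\sigma(\cdot,\mu_1),\sigma(\cdot,\mu_2))\ge d_E(\overline{g_\ast\mu_1},\overline{g_\ast\mu_2})$ together with a residual projection error. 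I would therefore flag this hypothesis explicitly in the write-up and then the rest of the argument is just the two displayed computations above.
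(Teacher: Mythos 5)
Your proof is correct and follows essentially the same route as the paper: both reduce the Fréchet functional via the pushforward to the Euclidean problem $\int_{\mathbb{R}^n}\|g(x)-z\|^2\,d(g_\ast\mu)(z)$ and identify the Euclidean minimizer with the ordinary mean (the paper via the first-order condition in its Lemma on Fréchet means, you via the variance decomposition — an immaterial difference). Your explicit flagging of the hypothesis $\overline{g_\ast\mu}\in g(X)$ is a point in your favor: the paper's intermediate claim $\sigma(\mathcal{X})=g^{-1}(\overline{g_\ast\mu})$ silently assumes the same reachability (its ``it is easy to see'' iff-step fails without it), so your version is the more careful write-up of the identical argument.
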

Let $\| \cdot \|$ denote $L^2$ norm and $d=(g_w)^\ast d_E$, then we have $d(x,x') = \|g_w(x) - g_w(x')\|$ for $\forall x, x' \in \mathbb{R}^D$. By above proposition,
the explicit formula of terms in our objective functions (\ref{eqn:MM}) are as follows:
$$ d\big(\sigma(S_R),\sigma(S_F)\big) = \|\overline{g_w(S_R)} - \overline{g_w(S_F)}\|,$$
$$ \operatorname{diam}_2(S) =\frac{1}{card(S)}\left(\sum_{x,x'\in x_R}\|g_w(x)-g_w(x')\|^2\right)^{1/2}.$$

For unconditional generation task, we take Eqn~(\ref{eqn:MM}) as manifold matching objective, and Eqn~(\ref{eqn:apn}) as metric learning objective.
During training we minimize both (\ref{eqn:MM}) and (\ref{eqn:apn}).
We display our implementation pipeline for unconditional image generation task in Fig.~\ref{fig:network} and summarize the training procedure in Algorithm $1$.
The convergence of training can be addressed using results from \cite{heusel2017gans}. Particularly, the setting in \cite{heusel2017gans} not only applies to min-max GANs, but is also valid for more general GANs where the discriminator’s objective is not necessarily related to the generator’s objective.  In our framework, using Adam optimizer with different decays for the two networks fits this setting.
\begin{algorithm} \label{algo:1}
	\caption{Metric learning assisted manifold matching}
	\hspace*{\algorithmicindent} \textbf{Input:} Real data manifold $M$, prior distribution $\nu$\\
	\hspace*{\algorithmicindent} \textbf{Output:} Distribution generator and metric generator parameters $\theta, w$
	\begin{algorithmic}  
		\WHILE {$\theta$ has not converged}
		\STATE {Sample real data set $S_R=\{x_1,\cdots,x_k\}$ from $M$;
		Sample random noise set $Z=\{z_1,\cdots,z_k\}$ from $\nu$;
		$\mathcal{L}_{MM} \leftarrow L_{MM}(S_R,f_\theta(Z))$  in which $d:=(g_w)^\ast d_E$;}
		\STATE $\theta \leftarrow$ Adam($\nabla_{\theta} \mathcal{L}_{MM}, \theta$)
		\STATE Sample $x^{(i)}_a,x^{(i)}_p$ from $M$ and sample $z^{(i)}$ from $\nu$, $i=1,2,\cdots,l$;
		\STATE $\mathcal{L}_{apn}\leftarrow\Sigma_{i=1}^l L_{apn}(x^{(i)}_a,x^{(i)}_p,f_\theta(z^{(i)}))$;
		\STATE $w \leftarrow$ Adam($\nabla_w \mathcal{L}_{apn}, w$);
		\ENDWHILE
	\end{algorithmic}
\end{algorithm}

As for super-resolution task, one can utilize information obtained from LR-HR pairs for more effective matching, thus we include an additional pair matching loss in this case:
$$
L_{pair} = \|g_w(x^{|}_R) - g_w(x^{|}_F) \|,
$$
where $x^{|}_F$ is super-resolved image from LR $x^{|}_L$, $x^{|}_L$ is downsampled image from HR $x^{|}_R$. 
We use $L^1$ norm for pixel-wise image loss, where
$L_{img} = \|x^{|}_R - x^{|}_F\|_1$.
Together, in SISR task our total data generator loss becomes
\begin{align}
\label{eqn:SR}
L_{gen} = L_{img} + \lambda_2 L_{pair} +  \lambda_3 L_{MM}.
\end{align}

\begin{figure}[h]
    \centering
    \includegraphics[width=0.48\textwidth,height=0.25\textwidth]{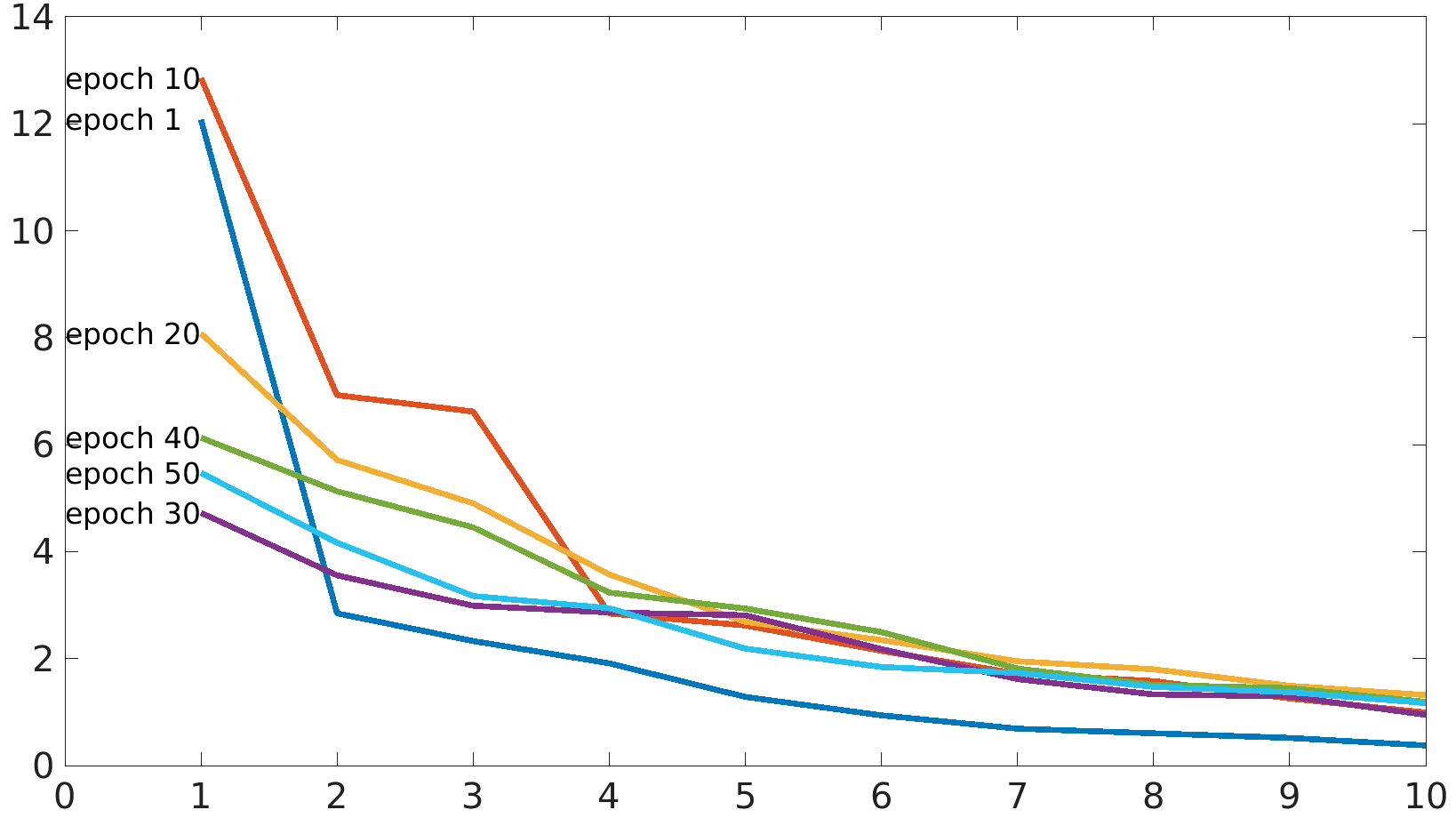}
    \caption{First 10 eigenvalues of distance matrices of 1024 random real data samples during training on CelebA.}
    \label{fig:eigens}
\end{figure}

\section{Experiments}
We conduct experiments on two tasks: unconditional image generation and single image super-resolution. 
To better understand the process of manifold matching, in both tasks we track various distances in batches during training. We represent distance between centroids of two sets as $d_c$, distance between $2$-diameters of two sets as $d_g$, and distance between paired SR-HR samples in super-resolution task as $d_p$, respectively. We also use Hausdorff distance as a measurement of distance between the two sets. Hausdorff distance is defined as ${\displaystyle d_{\mathrm {H} }(A,B)=\max \left\{\,\sup _{a\in A}\inf _{b\in B}d(a,b),\,\sup _{b\in B}\inf _{a\in A}d(a,b)\,\right\},\!}$, where $A$ and $B$ are two non-empty subsets of a metric space ($\mathcal{M}, d$). Here we adopt Euclidean metric to calculate Hausdorff distance between two sets of Triplet network embedding, which is equivalent to measure Hausdorff distance between two sets of corresponding images in the image space under learned metric. 
All experiments are implemented under PyTorch framework using a Tesla V100 GPU.

\subsection{Unconditional Image Generation}
We use the matching criteria in Eqn~\ref{eqn:MM} to validate the feasibility of the proposed framework. Note that no paired information or GAN loss is used for the task.

\noindent 
{\bf Implementation Details:} 
We employed a ResNet data generator and a deep convolutional net metric generator with $\gamma=0.01$, $\lambda=1$, dimension of input latent vector $m=128$, and output embedding $n = 10$ as our default setting. Adam optimizer with learning rate $1e-4$, $\beta _1 = 0$ and $\beta _2 = 0.9$ for data generator, and $\beta _1 = 0.5$ and $\beta _2 = 0.999$ for metric generator was used during training. Details of network architectures are provided in supplementary material.

\noindent 
{\bf Dataset and Evaluation Metrics:} We implemented our method on CelebA \cite{liu2015faceattributes} and LSUN bedroom \cite{journals/corr/YuZSSX15} 
datasets. For training we used around 200K images in CelebA and 3M images in LSUN. All images were center-cropped and resized to $32 \times 32$ or $64 \times 64$.  For each dataset we randomly generated 50K samples and used Fréchet Inception Distance (FID) \cite{heusel2017gans} for quantitative evaluation. Smaller FID indicates better result.

\noindent
{\bf Effect of Metric Learning on Distorting Real Data Manifold:}
During training we randomly choose 1024 real samples and detect its shape by looking at the eigenvalues of the corresponding (normalized) distance matrix. Particularly, we plot the first 10 largest eigenvalues which correspond to the size of the first 10 principle components of real samples. As shown in Fig.~\ref{fig:eigens}, the shape of the samples becomes more uniform with training going. This is an empirical evidence that $g_w$ distorts real data manifold to be uniformly curved.
In this situation, matching centers and diameters should be enough for manifold matching.\\
\noindent 
{\bf Effects of Matching Different Geometric Descriptors:}
We study the effect of matching different geometric descriptors.
Examples of generated $32 \times 32$ samples on CelebA using different matching criteria are shown in Fig.~\ref{fig:celeba_3in1}. (i) Centroid matching learns some common shallow patterns from real data set; (ii) Matching $2$-diameters could capture more complicated intrinsic structures of the data manifold, while misalignment between the two sets can result in low-quality samples (e.g. image on the right side in the second last row); (iii) Combining the two descriptors together leads to more stable sample quality.   
We also visualize these manifold matching status for illustration purpose. We project output of $g_w$ to $2$-dim plane using UMAP \cite{McInnes2018}, and display the projected points in Fig.~\ref{fig:celeba_3in1} (a)(b)(c). 
(a) and (b) intuitively show two typical matching status if one only matches centroids or $2$-diameters between the two sets, respectively. \\
\begin{figure}[ht]
    \centering
    \begin{tabular}{ccc}
    {\includegraphics[width=.15\textwidth]{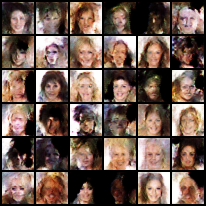}}\hspace{-1em} & 
    {\includegraphics[width=.15\textwidth]{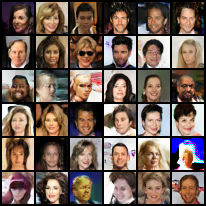}}\hspace{-1em} &  
    {\includegraphics[width=.15\textwidth]{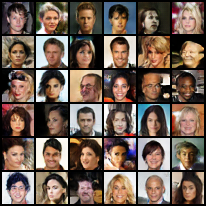}}
    \\
    (i)\hspace{-1em} & (ii) \hspace{-1em} & (iii)\\
    {\includegraphics[width=.15\textwidth]{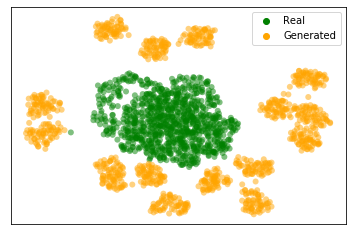}}\hspace{-1em} & 
    {\includegraphics[width=.15\textwidth]{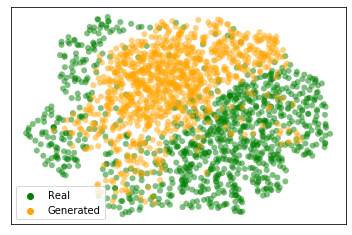}}\hspace{-1em} &  
    {\includegraphics[width=.15\textwidth]{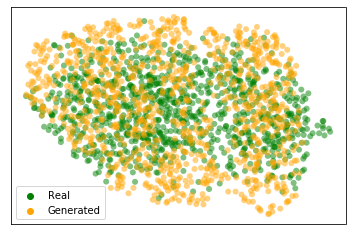}}
    \\
    (a)\hspace{-1em} & (b) \hspace{-1em} & (c)\\
    \end{tabular}
    \caption{Randomly generated $32 \times 32$ images on CelebA with different manifold matching criteria during training. (i) centroid only; (ii) $2$-diameter only; (iii) both centroid and $2$-diameter. (a)(b)(c) are corresponding UMAP plots of real (green) and generated (orange) samples with different manifold matching criteria. (a) centroid only; (b) $2$-diameter only; (c) both.}
    \label{fig:celeba_3in1}
\end{figure}
\begin{figure}[h]
	\centering  
	\includegraphics[width=3.2in]{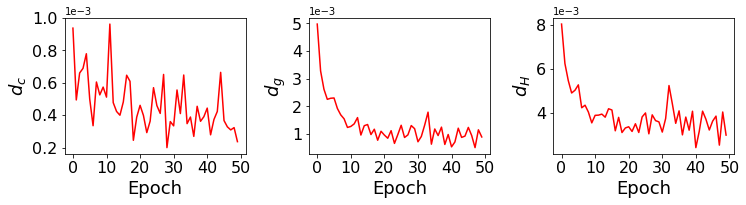}  
	\caption{Distances versus training epochs in unconditional image generation task.} 
	\label{fig:dist_mnist}
\end{figure}
\begin{table}[h]
    \small 
	\centering
	\caption{Comparisons of results trained on different batch sizes.}
	\label{tab:batch_size}
	\begin{tabular}{c|c|c|c|c|c|c}
		\hline
		Batch size & 32 & 64 & 128 & 256 & 512 & 1024 \\
		\hline
		Effective training & \cmark  & \cmark & \cmark & \cmark & \cmark & \xmark\\
		\hline
		Time(s) / epoch & 178 & 155 & 139 & 127 & 127 & - \\
		\hline
	\end{tabular}
    \vspace{-1em}
\end{table}

\noindent 
{\bf Effects of Batch Size:} 
We further study the influence of batch size on training time and stability. Table~\ref{tab:batch_size} reports average training time per epoch on resized $32 \times 32$ CelebA images. One can see batch size does not have a significant influence on average training time when matching $2$-diameters. In addition, we observe stable and efficient training sessions with different batch sizes. For batch size as large as $1024$, we did not observe satisfying sample quality in reasonable training time. 

\begin{table*}[ht]
	\centering
	\caption{Evaluation scores of different training settings in $\times 4$ SISR task. Demonstration of settings is displayed in Table~\ref{tab:setup}. For each pair of settings with the same generator backbone, the one with better performance is highlighted.} 
	\label{tab:SR_GAN_MM}
	\setlength{\tabcolsep}{3pt}
	\begin{tabular}{l|cccc|cccc|cccc|cccc}
	    \hline
	    \toprule
        &  \multicolumn{4}{c}{Set5} & \multicolumn{4}{c}{Set14} & \multicolumn{4}{c}{BSD100} & \multicolumn{4}{c}{Urban100}\\
        \midrule
		\hline
		 Setting
		 &\scriptsize PSNR &\scriptsize SSIM &\scriptsize LPIPS &\scriptsize NIQE
		 &\scriptsize PSNR &\scriptsize SSIM &\scriptsize LPIPS &\scriptsize NIQE
		 &\scriptsize PSNR &\scriptsize SSIM &\scriptsize LPIPS &\scriptsize NIQE
		 &\scriptsize PSNR &\scriptsize SSIM &\scriptsize LPIPS &\scriptsize NIQE\\
		\hline
	    ResNet-GAN &\scriptsize 29.03 &\scriptsize 0.8468 &\scriptsize \bf 0.1885  &\scriptsize 7.2143 
	    &\scriptsize 25.64 &\scriptsize 0.7420 &\scriptsize 0.2761  &\scriptsize \bf 5.2654 
	    &\scriptsize 25.74 &\scriptsize 0.7026 &\scriptsize \bf 0.3160 &\scriptsize \bf 5.3896 
	    &\scriptsize 23.49 &\scriptsize 0.7273 &\scriptsize 0.2888 &\scriptsize \bf 4.6504\\
	    ResNet-MvM &\scriptsize \bf 29.76 &\scriptsize \bf 0.8606 &\scriptsize 0.1941  &\scriptsize \bf 6.1478 
	    &\scriptsize \bf 26.12 &\scriptsize \bf 0.7562 &\scriptsize \bf 0.2724  &\scriptsize 5.3458 
	    &\scriptsize \bf 26.07 &\scriptsize \bf 0.7150 &\scriptsize 0.3178  &\scriptsize 5.5508
	    &\scriptsize \bf 24.06 &\scriptsize \bf 0.7505 &\scriptsize \bf 0.2774 &\scriptsize 4.7549\\
	    \hline
	    RDN-GAN &\scriptsize 29.07 &\scriptsize 0.8442 &\scriptsize \bf 0.1841 &\scriptsize 6.6459
	    &\scriptsize 25.38 &\scriptsize 0.7355 &\scriptsize 0.2729 &\scriptsize 5.3887
	    &\scriptsize 25.72 &\scriptsize 0.7029 &\scriptsize \bf 0.3063 &\scriptsize 5.6734
	    &\scriptsize 23.11 &\scriptsize 0.7190 &\scriptsize 0.2876 &\scriptsize 4.6546\\
	    RDN-MvM &\scriptsize \bf 30.06 &\scriptsize \bf 0.8658 &\scriptsize 0.1850  &\scriptsize \bf 6.1283
	    &\scriptsize \bf 26.31 &\scriptsize \bf 0.7615 &\scriptsize \bf 0.2641 &\scriptsize \bf 5.2161
	    &\scriptsize \bf 26.24 &\scriptsize \bf 0.7210 &\scriptsize 0.3100 &\scriptsize \bf 5.4127
	    &\scriptsize \bf 24.44 &\scriptsize \bf 0.7645 &\scriptsize \bf 0.2566 &\scriptsize \bf 4.5914\\
	    \hline
	    NSRNet-GAN &\scriptsize 29.46 &\scriptsize 0.8544 &\scriptsize 0.1852 &\scriptsize \bf 5.7818 
	    &\scriptsize 25.93 &\scriptsize 0.7478 &\scriptsize 0.2666 &\scriptsize \bf 5.1213
	    &\scriptsize 25.93 &\scriptsize 0.7094 &\scriptsize 0.3119 &\scriptsize \bf 5.2069 
	    &\scriptsize 23.70 &\scriptsize 0.7330 &\scriptsize 0.2832 &\scriptsize 5.1579\\
	    \hline
	    NSRNet-MvM &\scriptsize \bf 29.79 &\scriptsize \bf 0.8641 &\scriptsize \bf 0.1845 &\scriptsize 6.0846
	    &\scriptsize \bf 26.17 &\scriptsize \bf 0.7590 &\scriptsize \bf 0.2655 &\scriptsize 5.3175
	    &\scriptsize \bf 26.13 &\scriptsize \bf 0.7188 &\scriptsize \bf 0.3114 &\scriptsize 5.3794 
	    &\scriptsize \bf 24.19 &\scriptsize \bf 0.7569 &\scriptsize \bf 0.2621 &\scriptsize \bf 4.5937\\
		\hline
	\end{tabular}
\end{table*}

\noindent 
{\bf Quantitative Results:}
We track $d_c, d_g$ and $d_H$ during training and display them in Fig.~\ref{fig:dist_mnist}. 
With training going forward, the distances keep decreasing and gradually converge. The observation aligns with our manifold matching assumption even with no labelled information involved.
For quantitative evaluation we present FID scores in Table~\ref{tab:FID_image64}. Here we also display results from some classic GAN frameworks using the same generator architecture. 
As shown in the table our method obtains competitive results. 
\begin{table}[h]
    \small
	\centering
	\caption{FID evaluation on $64 \times 64$ experiments with a ResNet generator.}
	\label{tab:FID_image64}
	\begin{tabular}{c|c|c}
		\hline
		Method & CelebA & LSUN bedroom \\
		\hline
		WGAN \cite{wgan2017}    & 37.1 (1.9)  & 73.3 (2.5) \\
		WGAN-GP \cite{NIPS2017_892c3b1c} & 18.0 (0.7)  & 26.9 (1.1) \\
		SNGAN \cite{miyato2018spectral}  & 21.7 (1.5)  & 31.3 (2.1) \\
		SWGAN \cite{Wu_2019_CVPR}    & 13.2 (0.7)  & 14.9 (1.0) \\
		\hline
		MvM  & {\bf 11.1 (0.1)} & {\bf 13.7 (0.3)} \\
		\hline
	\end{tabular}
\end{table}
Examples of randomly generated samples are displayed in Fig.~\ref{fig:sample64}. 
\begin{figure}[ht]
	\centering  
	\includegraphics[width=3.2in, height=3in]{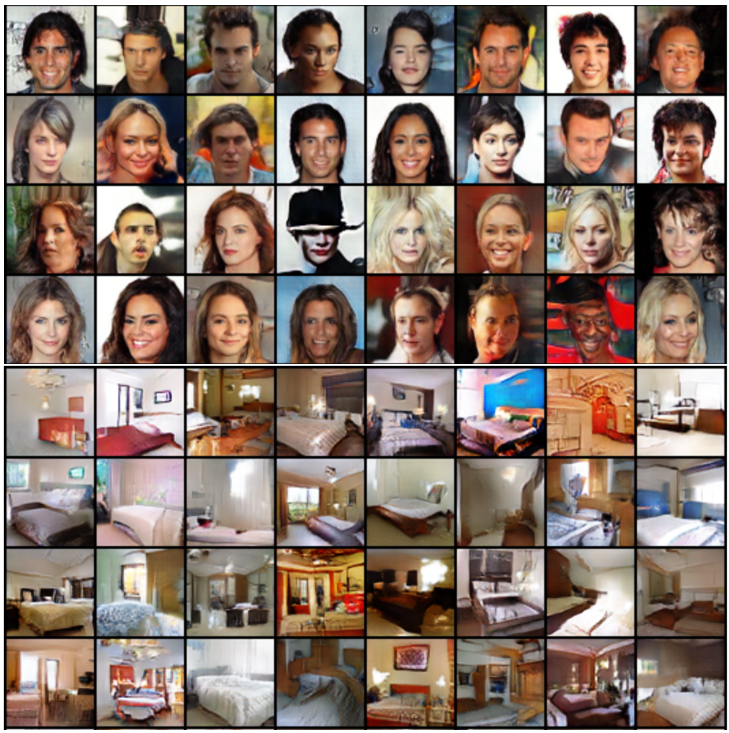} 
	\caption{Randomly generated $64 \times 64$ samples on CelebA and LSUN bedroom.} 
	\label{fig:sample64}
\end{figure}


\subsection{Single Image Super-Resolution}
In a typical perception-based SR model \cite{ledig2017photo,Soh_2019_CVPR}, the generator loss function is usually made up by three components: pixel-wise image loss, GAN loss and perceptual loss (or naturalness loss in \cite{Soh_2019_CVPR}).
Here we explore the use of our work with two different settings: (A) Our approach (MvM) serves as a substitute of GAN loss. In this case we use Eqn~\ref{eqn:SR} as the total generator loss function without involving GAN loss; (B) MvM serves as a substitute of naturalness loss. In (B) MvM is used as a complement of GAN in perception-based models, where a GAN loss is added to Eqn~\ref{eqn:SR} as the total generator loss. 

\noindent 
{\bf Implementation Details:} 
For setting (A), we compare MvM and GAN using three different generator backbones: ResNet \cite{ledig2017photo}, RDN \cite{zhang2018residual} and NSRNet \cite{Soh_2019_CVPR}. All experiments were conducted with the same training setup. 
For setting (B), we compare the effects of different perceptual components in perception-based models. We employed the ResNet architecture in \cite{ledig2017photo} as the default generator backbone, and RaGAN \cite{DBLP:journals/corr/abs-1807-00734} with weight $2e-3$ as GAN component for consistency. 
For both (A) and (B) we experimented on $\times 4$ SR task. We utilized Adam optimizer with learning rate $1e-4$, $\beta_1 = 0.9$, $\beta_2 = 0.999$ , metric learning direction guidance parameter $\gamma=1e-2$, weight of diameter matching term $\lambda=1$,  dimension of Triplet network output embedding $n = 32$, batch size $=32$ , $\lambda_2  = \lambda_3  = 1e-3$, and trained for 100K iterations. Details of Triplet network architecture is presented in supplementary material.

\begin{figure}[ht]
	\centering  
	\includegraphics[width=3.3in]{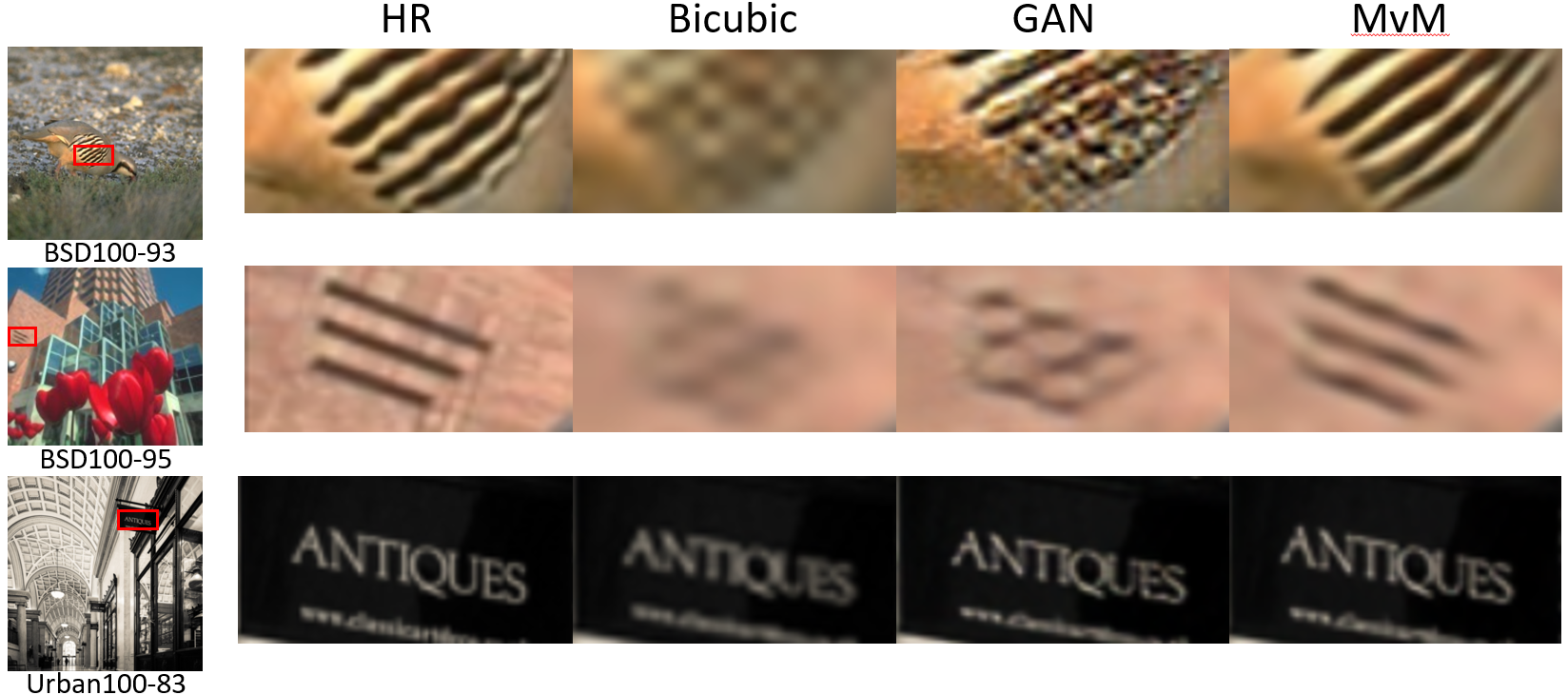}  
	\caption{Generated $\times 4$ samples with GAN loss and MvM loss using the same generator backbone. (Zoom in for better view.)} 
	\label{fig:SR1}
\end{figure} 

\begin{table*}[ht]
	\centering
	\caption{Evaluation scores of different training settings in $\times 4$ SISR task. Demonstration of settings is displayed in Table~\ref{tab:setup}. The best performance is highlighted in red and second best in blue.} 
	\label{tab:SSIM_PSNR}
	\setlength{\tabcolsep}{3pt}
	\begin{tabular}{c|cccc|cccc|cccc|cccc}
	    \hline
	    \toprule
        &  \multicolumn{4}{c}{Set5} & \multicolumn{4}{c}{Set14} & \multicolumn{4}{c}{BSD100} & \multicolumn{4}{c}{Urban100}\\
        \midrule
		\hline
		 Setting
		 &\scriptsize PSNR &\scriptsize SSIM &\scriptsize LPIPS &\scriptsize NIQE
		 &\scriptsize PSNR &\scriptsize SSIM &\scriptsize LPIPS &\scriptsize NIQE
		 &\scriptsize PSNR &\scriptsize SSIM &\scriptsize LPIPS &\scriptsize NIQE
		 &\scriptsize PSNR &\scriptsize SSIM &\scriptsize LPIPS &\scriptsize NIQE\\
		\hline
	    A &\scriptsize 28.42 &\scriptsize 0.8104 &\scriptsize 0.2977  &\scriptsize 7.3647 
	    &\scriptsize 26.00 &\scriptsize 0.7027 &\scriptsize 0.3757  &\scriptsize 6.3393 
	    &\scriptsize 25.96 &\scriptsize 0.6675 &\scriptsize 0.4637 &\scriptsize 6.2967 
	    &\scriptsize 23.14 &\scriptsize 0.6577 &\scriptsize 0.4182 &\scriptsize 6.6429\\
	    \hline
	    B &\scriptsize 29.66 &\scriptsize 0.8586 &\scriptsize 0.1401  &\scriptsize 6.6582 
	    &\scriptsize 26.08 &\scriptsize 0.7542 &\scriptsize 0.2265  &\scriptsize 5.3638 
	    &\scriptsize 26.05 &\scriptsize 0.7145 &\scriptsize 0.3257  &\scriptsize 5.5376 &
	    \scriptsize 24.00 &\scriptsize 0.7464 &\scriptsize 0.2156 &\scriptsize 4.7883\\
	    \hline
	    C &\scriptsize 29.30 &\scriptsize 0.8445 &\scriptsize 0.1324 &\scriptsize 6.6590 
	    &\scriptsize 25.81 &\scriptsize 0.7394 &\scriptsize 0.2132 &\scriptsize 5.5318
	    &\scriptsize 25.81 &\scriptsize 0.7010 &\scriptsize 0.3006 &\scriptsize 5.8399 
	    &\scriptsize 23.83 &\scriptsize 0.7346 &\scriptsize 0.2047 &\scriptsize 4.6377\\
	    \hline
	    D &\scriptsize 29.75 &\scriptsize 0.8593 &\scriptsize 0.1357  &\scriptsize 6.5475
	    &\scriptsize \textcolor{blue}{26.11} &\scriptsize 0.7542 &\scriptsize 0.2193 &\scriptsize 5.3559
	    &\scriptsize \textcolor{blue}{26.06} &\scriptsize 0.7145 &\scriptsize 0.3160 &\scriptsize 5.6108
	    &\scriptsize \textcolor{blue}{24.15} &\scriptsize \textcolor{blue}{0.7506} &\scriptsize 0.2028 &\scriptsize 4.7447\\
	    \hline
	    E &\scriptsize 29.63 &\scriptsize 0.8547 &\scriptsize \textcolor{red}{0.1227} &\scriptsize 6.2838 
	    &\scriptsize 26.07 &\scriptsize 0.7512 &\scriptsize \textcolor{blue}{0.2051} &\scriptsize \textcolor{blue}{5.0657} 
	    &\scriptsize 26.05 &\scriptsize 0.7119 &\scriptsize \textcolor{blue}{0.2957} &\scriptsize \textcolor{blue}{5.2257} 
	    &\scriptsize 24.12 &\scriptsize 0.7484 &\scriptsize \textcolor{red}{0.1862} &\scriptsize \textcolor{red}{4.2814}\\
	    \hline
	    F &\scriptsize \textcolor{blue}{29.70} &\scriptsize \textcolor{blue}{0.8603} &\scriptsize 0.1409 &\scriptsize \textcolor{blue}{6.2673}
	    &\scriptsize 26.06 &\scriptsize \textcolor{blue}{0.7556} &\scriptsize 0.2239 &\scriptsize 5.3777
	    &\scriptsize 26.05 &\scriptsize \textcolor{red}{0.7146} &\scriptsize 0.3218 &\scriptsize 5.6095 
	    &\scriptsize 23.98 &\scriptsize 0.7477 &\scriptsize 0.2126 &\scriptsize 4.7880\\
	    \hline
	    G &\scriptsize \textcolor{red}{29.87} &\scriptsize \textcolor{red}{0.8615} &\scriptsize \textcolor{blue}{0.1281} &\scriptsize \textcolor{red}{5.8904} 
	    &\scriptsize \textcolor{red}{26.17} &\scriptsize \textcolor{red}{0.7564} &\scriptsize \textcolor{red}{0.2048} &\scriptsize \textcolor{red}{4.9612}
	    &\scriptsize \textcolor{red}{26.10} &\scriptsize \textcolor{blue}{0.7134} &\scriptsize \textcolor{red}{0.2930} &\scriptsize \textcolor{red}{4.9864} 
	    &\scriptsize \textcolor{red}{24.25} &\scriptsize \textcolor{red}{0.7567} &\scriptsize \textcolor{blue}{0.1866} &\scriptsize \textcolor{blue}{4.3432}\\ 
		\hline
	\end{tabular}
    \vspace{-1em}
\end{table*}

\noindent 
{\bf Datasets and Evaluation Metrics:} We used 800 HR images in DIV2K \cite{Agustsson_2017_CVPR_Workshops} dataset as training set, and four benchmark datasets: Set5 \cite{BMVC.26.135}, Set14 \cite{Zeyde12onsingle}, BSD100 \cite{martin2001database} and Urban100 \cite{Huang_2015_CVPR} for testing. HR images were downsampled by bicubic interpolation to get $48 \times 48$ LR input patches. We evaluated results using Structure Similarity (SSIM) \cite{Wang04imagequality}, PSNR as distortion-oriented evaluation metrics, and Naturalness Image Quality Evaluator (NIQE) \cite{mittal2012making},  Learned Perceptual Image
Patch Similarity (LPIPS) \cite{zhang2018unreasonable} as perception-oriented evaluation metrics. Lower NIQE and LPIPS scores indicate higher perceptual quality. 

\begin{table}[ht]
    \scriptsize 
	\centering
	\caption{Training settings for GAN-based SISR methods with fixed generator architecture and GAN component.}
	\label{tab:setup}
	\begin{tabular}{c|c|c|c}
		\hline
		Setting & Method & Perceptual component & Pretrained \\
		\hline
		A & Bicubic & - & -\\
		\hline
		B & GAN-ResNet & - & -\\
		\hline
		C & SRGAN & VGG16-Pooling5 & Yes\\
		\hline
		D & EnhanceNet & VGG19-Pooling2,5 & Yes\\
		\hline
		E & ESRGAN & VGG19-Conv5-4 & Yes\\
		\hline
		F & NatSR & NMD & Yes\\
		\hline
		G(ours) & MvM & MM & No\\
		\hline
	\end{tabular}
    \vspace{-1em}
\end{table}


\noindent
{\bf Results:}
We record $d_p, d_c, d_g$, and Hausdorff distance $d_H$ during training and display them in Fig.\ref{fig:sr_dist}. One can see all distances keeps decreasing with training going forward, which is aligned with our basic assumption that the two sets of points keep getting close to each other. 
Note that the spike effect is common for Adam optimizer when feeding some abnormal random batch of training data. 
\begin{figure}[ht]
	\centering  
	\includegraphics[width=3.2in]{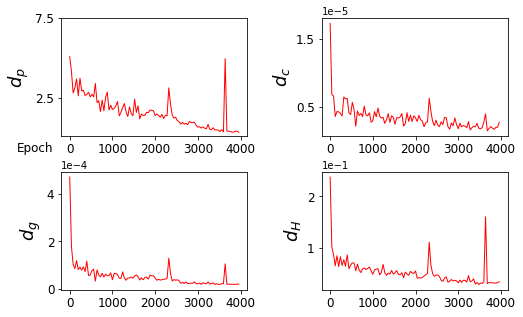}  
	\caption{Various distances versus training epochs in super-resolution experiment with our method. Distances are displayed every 40 epochs (1000 iterations).} 
	\label{fig:sr_dist}
    \vspace{-1em}
\end{figure}

\noindent 
{\bf (A) MvM As A Substitute of GAN Loss:} 
We display comparison of evaluation results between GAN and MvM in Table~\ref{tab:SR_GAN_MM}. Under different backbones, MvM performs better than GAN in similarity-based metrics in all cases. It also obtains better scores in perception-based metrics in majority of the cases.  
Examples of generated samples using the same generator backbone are displayed in Fig.~\ref{fig:SR1}. We see MvM resulted in samples with more natural textures. 

\noindent 
{\bf (B) MvM As A Substitute of Naturalness Loss:}
We display a few different setups in Table~\ref{tab:setup}.
Final results with both distortion-based and perception-based evaluation scores on benchmark datasets are presented in Table~\ref{tab:SSIM_PSNR}, where GAN-ResNet represents SRGAN without VGG component in loss function. With the same common setup, MvM obtained better results in most of the cases for both types of metrics.
Examples of generated samples from various settings are shown in Fig.~\ref{fig:SR_comp}. As we see MvM generates samples with more natural textures and less artifacts under the same setup.
\begin{figure}[ht]
	\centering  
	\includegraphics[width=3.3in]{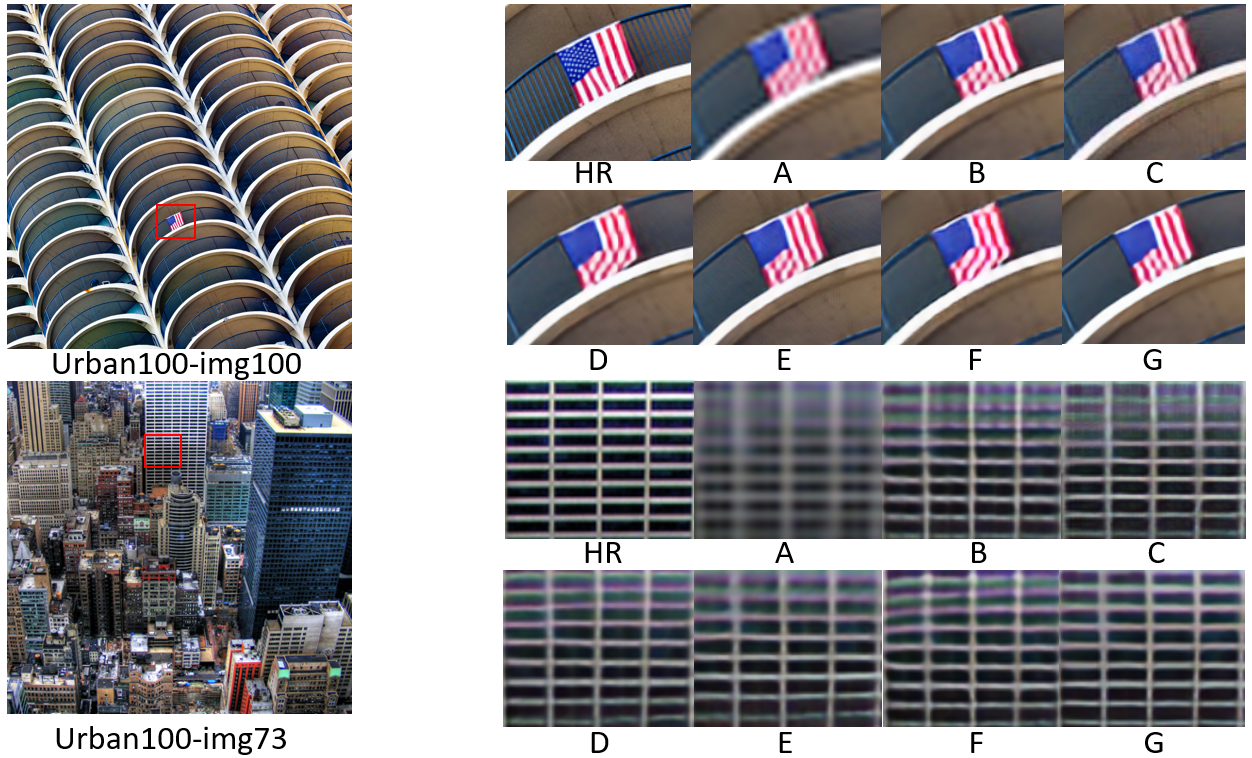}  
	\caption{Comparison of generated samples from different settings. All settings were trained with the same ResNet generator backbone and GAN components. (Zoom in for better view.)} 
	\label{fig:SR_comp}
    \vspace{-2em}	
\end{figure} 
We notice that although both GAN and MvM utilize adversarial learning for training, the two approaches behave differently in super-resolution task. GAN tends to generate more details but with artifacts, while MvM tends to generate more natural textures in images. The two approaches do not conflict with each other. Instead, one serves as a complement for the other to result in better sample quality.   



\section{Discussion and Conclusion}
In this paper, we have proposed a manifold matching approach for generative modeling, which matches geometric descriptors of real and generated data sets using learned distance metrics.
Experiments on two tasks validated its feasibility and effectiveness.
Moreover, the proposed framework is robust and flexible in that each network has its own designated objective.
Despite that our method has led to some promising results, there is yet much room for improvements.
For example, the currently used geometric descriptors may not fully recover the information of the underlying manifolds, thus matching towards other descriptors could potentially benefit the learning process. 
As to metric learning, in our paper we employ the method in~\cite{Mohan_2020_CVPR} fed with random samples to learn a metric, while in practice one could investigate ways for better approximation of metrics, such as utilizing sampling methods or other metric learning methods.
In addition, although empirical evidences and intuitions agree with the current metric measure space setting, further theoretical analysis using optimal transport is worth exploring. 
In the last few years we have witnessed great success of probability-based generative modeling approaches, and we believe joining geometry with statistics would lead to stronger expression ability for generative models, which is a promising direction for researchers to explore. 

\section{Acknowledgment}
MD and HH would like to thank the four anonymous reviewers especially Reviewer 1 and Reviewer 4 for their valuable comments. HH acknowledges partial support by NSF grant DMS-1854683.

\clearpage

\bibliographystyle{ieee_fullname}%
\bibliography{mmml}

\begin{thebibliography}{10}\itemsep=-1pt

\bibitem{10.5555/645504.656414}
Charu~C. Aggarwal, Alexander Hinneburg, and Daniel~A. Keim.
\newblock On the surprising behavior of distance metrics in high dimensional
  spaces.
\newblock In {\em Proceedings of the 8th International Conference on Database
  Theory}. Springer-Verlag, 2001.

\bibitem{Agustsson_2017_CVPR_Workshops}
Eirikur Agustsson and Radu Timofte.
\newblock Ntire 2017 challenge on single image super-resolution: Dataset and
  study.
\newblock In {\em The IEEE Conference on Computer Vision and Pattern
  Recognition (CVPR) Workshops}, July 2017.

\bibitem{wgan2017}
Martin Arjovsky, Soumith Chintala, and L{\'e}on Bottou.
\newblock Wasserstein generative adversarial networks.
\newblock In {\em Proceedings of the 34th International Conference on Machine
  Learning}, volume~70, pages 214--223, 2017.

\bibitem{arnaudon2013medians}
Marc Arnaudon, Fr{\'e}d{\'e}ric Barbaresco, and Le Yang.
\newblock Medians and means in riemannian geometry: existence, uniqueness and
  computation.
\newblock In {\em Matrix Information Geometry}, pages 169--197. Springer, 2013.

\bibitem{BMVC.26.135}
Marco Bevilacqua, Aline Roumy, Christine Guillemot, and Marie line
  Alberi~Morel.
\newblock Low-complexity single-image super-resolution based on nonnegative
  neighbor embedding.
\newblock In {\em Proceedings of the British Machine Vision Conference}, pages
  135.1--135.10. BMVA Press, 2012.

\bibitem{Bhat_2021_CVPR}
Goutam Bhat, Martin Danelljan, Luc Van~Gool, and Radu Timofte.
\newblock Deep burst super-resolution.
\newblock In {\em Proceedings of the IEEE/CVF Conference on Computer Vision and
  Pattern Recognition (CVPR)}, pages 9209--9218, June 2021.

\bibitem{bhattacharya2003large}
Rabi Bhattacharya and Vic Patrangenaru.
\newblock Large sample theory of intrinsic and extrinsic sample means on
  manifolds.
\newblock {\em The Annals of Statistics}, 31(1):1--29, 2003.

\bibitem{Cai_2019_ICCV}
Jianrui Cai, Hui Zeng, Hongwei Yong, Zisheng Cao, and Lei Zhang.
\newblock Toward real-world single image super-resolution: A new benchmark and
  a new model.
\newblock In {\em Proceedings of the IEEE/CVF International Conference on
  Computer Vision (ICCV)}, October 2019.

\bibitem{Deshpande_2019_CVPR}
Ishan Deshpande, Yuan-Ting Hu, Ruoyu Sun, Ayis Pyrros, Nasir Siddiqui, Sanmi
  Koyejo, Zhizhen Zhao, David Forsyth, and Alexander~G. Schwing.
\newblock Max-sliced wasserstein distance and its use for gans.
\newblock In {\em Proceedings of the IEEE/CVF Conference on Computer Vision and
  Pattern Recognition (CVPR)}, June 2019.

\bibitem{Ding_2020_CVPR}
Zheng Ding, Yifan Xu, Weijian Xu, Gaurav Parmar, Yang Yang, Max Welling, and
  Zhuowen Tu.
\newblock Guided variational autoencoder for disentanglement learning.
\newblock In {\em Proceedings of the IEEE/CVF Conference on Computer Vision and
  Pattern Recognition (CVPR)}, June 2020.

\bibitem{Duan_2018_CVPR}
Yueqi Duan, Wenzhao Zheng, Xudong Lin, Jiwen Lu, and Jie Zhou.
\newblock Deep adversarial metric learning.
\newblock In {\em Proceedings of the IEEE Conference on Computer Vision and
  Pattern Recognition (CVPR)}, June 2018.

\bibitem{pmlr-v84-genevay18a}
Aude Genevay, Gabriel Peyre, and Marco Cuturi.
\newblock Learning generative models with sinkhorn divergences.
\newblock In {\em Proceedings of the Twenty-First International Conference on
  Artificial Intelligence and Statistics}, volume~84 of {\em Proceedings of
  Machine Learning Research}, pages 1608--1617, 2018.

\bibitem{goodfellow2014generative}
Ian Goodfellow, Jean Pouget-Abadie, Mehdi Mirza, Bing Xu, David Warde-Farley,
  Sherjil Ozair, Aaron Courville, and Yoshua Bengio.
\newblock Generative adversarial nets.
\newblock In {\em Advances in neural information processing systems}, pages
  2672--2680, 2014.

\bibitem{grove1973conjugatec}
Karsten Grove and Hermann Karcher.
\newblock How to conjugatec ${C}^1$-close group actions.
\newblock {\em Mathematische Zeitschrift}, 132(1):11--20, 1973.

\bibitem{NIPS2017_892c3b1c}
Ishaan Gulrajani, Faruk Ahmed, Martin Arjovsky, Vincent Dumoulin, and Aaron~C
  Courville.
\newblock Improved training of wasserstein gans.
\newblock In I. Guyon, U.~V. Luxburg, S. Bengio, H. Wallach, R. Fergus, S.
  Vishwanathan, and R. Garnett, editors, {\em Advances in Neural Information
  Processing Systems}, volume~30, 2017.

\bibitem{Guo_2020_CVPR}
Yong Guo, Jian Chen, Jingdong Wang, Qi Chen, Jiezhang Cao, Zeshuai Deng, Yanwu
  Xu, and Mingkui Tan.
\newblock Closed-loop matters: Dual regression networks for single image
  super-resolution.
\newblock In {\em IEEE/CVF Conference on Computer Vision and Pattern
  Recognition (CVPR)}, June 2020.

\bibitem{heusel2017gans}
Martin Heusel, Hubert Ramsauer, Thomas Unterthiner, Bernhard Nessler, and Sepp
  Hochreiter.
\newblock Gans trained by a two time-scale update rule converge to a local nash
  equilibrium.
\newblock {\em arXiv preprint arXiv:1706.08500}, 2017.

\bibitem{hoffer2015deep}
Elad Hoffer and Nir Ailon.
\newblock Deep metric learning using triplet network.
\newblock In {\em International Workshop on Similarity-Based Pattern
  Recognition}, pages 84--92. Springer, 2015.

\bibitem{Huang_2015_CVPR}
Jia-Bin Huang, Abhishek Singh, and Narendra Ahuja.
\newblock Single image super-resolution from transformed self-exemplars.
\newblock In {\em Proceedings of the IEEE Conference on Computer Vision and
  Pattern Recognition (CVPR)}, June 2015.

\bibitem{DBLP:journals/corr/abs-1807-00734}
Alexia Jolicoeur{-}Martineau.
\newblock The relativistic discriminator: a key element missing from standard
  {GAN}.
\newblock {\em CoRR}, abs/1807.00734, 2018.

\bibitem{VAE}
Diederik~P. Kingma and Max Welling.
\newblock {Auto-Encoding Variational Bayes}.
\newblock In {\em 2nd International Conference on Learning Representations,
  {ICLR} 2014, Banff, AB, Canada, April 14-16, 2014, Conference Track
  Proceedings}, 2014.

\bibitem{LeCun06atutorial}
Yann LeCun, Sumit Chopra, Raia Hadsell, Fu~Jie Huang, and et al.
\newblock A tutorial on energy-based learning.
\newblock In {\em PREDICTING STRUCTURED DATA}. MIT Press, 2006.

\bibitem{ledig2017photo}
Christian Ledig, Lucas Theis, Ferenc Husz{\'a}r, Jose Caballero, Andrew
  Cunningham, Alejandro Acosta, Andrew Aitken, Alykhan Tejani, Johannes Totz,
  Zehan Wang, et~al.
\newblock Photo-realistic single image super-resolution using a generative
  adversarial network.
\newblock In {\em Proceedings of the IEEE conference on computer vision and
  pattern recognition}, pages 4681--4690, 2017.

\bibitem{Lee_2019_CVPR}
Chen-Yu Lee, Tanmay Batra, Mohammad~Haris Baig, and Daniel Ulbricht.
\newblock Sliced wasserstein discrepancy for unsupervised domain adaptation.
\newblock In {\em Proceedings of the IEEE/CVF Conference on Computer Vision and
  Pattern Recognition (CVPR)}, June 2019.

\bibitem{lei2017geometric}
Na Lei, Kehua Su, Li Cui, Shing-Tung Yau, and David~Xianfeng Gu.
\newblock A geometric view of optimal transportation and generative model,
  2017.

\bibitem{li2017mmd}
Chun-Liang Li, Wei-Cheng Chang, Yu Cheng, Yiming Yang, and Barnab{\'a}s
  P{\'o}czos.
\newblock Mmd gan: Towards deeper understanding of moment matching network.
\newblock In {\em Advances in Neural Information Processing Systems}, pages
  2203--2213, 2017.

\bibitem{Lim_2017_CVPR_Workshops}
Bee Lim, Sanghyun Son, Heewon Kim, Seungjun Nah, and Kyoung Mu~Lee.
\newblock Enhanced deep residual networks for single image super-resolution.
\newblock In {\em Proceedings of the IEEE Conference on Computer Vision and
  Pattern Recognition (CVPR) Workshops}, July 2017.

\bibitem{lim2017geometric}
Jae~Hyun Lim and Jong~Chul Ye.
\newblock Geometric gan, 2017.

\bibitem{Liu_2020_CVPR}
Jie Liu, Wenjie Zhang, Yuting Tang, Jie Tang, and Gangshan Wu.
\newblock Residual feature aggregation network for image super-resolution.
\newblock In {\em IEEE/CVF Conference on Computer Vision and Pattern
  Recognition (CVPR)}, June 2020.

\bibitem{liu2015faceattributes}
Ziwei Liu, Ping Luo, Xiaogang Wang, and Xiaoou Tang.
\newblock Deep learning face attributes in the wild.
\newblock In {\em Proceedings of International Conference on Computer Vision
  (ICCV)}, December 2015.

\bibitem{Ma_2020_CVPR}
Cheng Ma, Yongming Rao, Yean Cheng, Ce Chen, Jiwen Lu, and Jie Zhou.
\newblock Structure-preserving super resolution with gradient guidance.
\newblock In {\em IEEE/CVF Conference on Computer Vision and Pattern
  Recognition (CVPR)}, June 2020.

\bibitem{martin2001database}
David Martin, Charless Fowlkes, Doron Tal, and Jitendra Malik.
\newblock A database of human segmented natural images and its application to
  evaluating segmentation algorithms and measuring ecological statistics.
\newblock In {\em Proceedings Eighth IEEE International Conference on Computer
  Vision. ICCV 2001}, volume~2, pages 416--423. IEEE, 2001.

\bibitem{McInnes2018}
Leland McInnes, John Healy, Nathaniel Saul, and Lukas Großberger.
\newblock Umap: Uniform manifold approximation and projection.
\newblock {\em Journal of Open Source Software}, 3(29):861, 2018.

\bibitem{memoli2011gromov}
Facundo M{\'e}moli.
\newblock Gromov--wasserstein distances and the metric approach to object
  matching.
\newblock {\em Foundations of computational mathematics}, 11(4):417--487, 2011.

\bibitem{mittal2012making}
Anish Mittal, Rajiv Soundararajan, and Alan~C Bovik.
\newblock Making a “completely blind” image quality analyzer.
\newblock {\em IEEE Signal processing letters}, 20(3):209--212, 2012.

\bibitem{miyato2018spectral}
Takeru Miyato, Toshiki Kataoka, Masanori Koyama, and Yuichi Yoshida.
\newblock Spectral normalization for generative adversarial networks.
\newblock In {\em International Conference on Learning Representations}, 2018.

\bibitem{Mohan_2020_CVPR}
Deen~Dayal Mohan, Nishant Sankaran, Dennis Fedorishin, Srirangaraj Setlur, and
  Venu Govindaraju.
\newblock Moving in the right direction: A regularization for deep metric
  learning.
\newblock In {\em Proceedings of the IEEE/CVF Conference on Computer Vision and
  Pattern Recognition (CVPR)}, June 2020.

\bibitem{park2018mmgan}
Noseong Park, Ankesh Anand, Joel Ruben~Antony Moniz, Kookjin Lee, Jaegul Choo,
  David~Keetae Park, Tanmoy Chakraborty, Hongkyu Park, and Youngmin Kim.
\newblock Mmgan: Manifold-matching generative adversarial networks.
\newblock In {\em 2018 24th International Conference on Pattern Recognition
  (ICPR)}, pages 1343--1348. IEEE, 2018.

\bibitem{Priebe10manifoldmatching}
Carey~E. Priebe, David~J. Marchette, Zhiliang~Ma Jhu, and Sancar~Adali Jhu.
\newblock Manifold matching: Joint optimization of fidelity and
  commensurability, 2010.

\bibitem{Qiu_2019_ICCV}
Yajun Qiu, Ruxin Wang, Dapeng Tao, and Jun Cheng.
\newblock Embedded block residual network: A recursive restoration model for
  single-image super-resolution.
\newblock In {\em Proceedings of the IEEE/CVF International Conference on
  Computer Vision (ICCV)}, October 2019.

\bibitem{Rao_2020_CVPR}
Kanishka Rao, Chris Harris, Alex Irpan, Sergey Levine, Julian Ibarz, and Mohi
  Khansari.
\newblock Rl-cyclegan: Reinforcement learning aware simulation-to-real.
\newblock In {\em Proceedings of the IEEE/CVF Conference on Computer Vision and
  Pattern Recognition (CVPR)}, June 2020.

\bibitem{Rolinek_2019_CVPR}
Michal Rolinek, Dominik Zietlow, and Georg Martius.
\newblock Variational autoencoders pursue pca directions (by accident).
\newblock In {\em Proceedings of the IEEE/CVF Conference on Computer Vision and
  Pattern Recognition (CVPR)}, June 2019.

\bibitem{sajjadi2017enhancenet}
Mehdi~SM Sajjadi, Bernhard Scholkopf, and Michael Hirsch.
\newblock Enhancenet: Single image super-resolution through automated texture
  synthesis.
\newblock In {\em Proceedings of the IEEE International Conference on Computer
  Vision}, pages 4491--4500, 2017.

\bibitem{Shao_2018_CVPR_Workshops}
Hang Shao, Abhishek Kumar, and P. Thomas~Fletcher.
\newblock The riemannian geometry of deep generative models.
\newblock In {\em Proceedings of the IEEE Conference on Computer Vision and
  Pattern Recognition (CVPR) Workshops}, June 2018.

\bibitem{Shen2017ManifoldMU}
Cencheng Shen, J. Vogelstein, and C. Priebe.
\newblock Manifold matching using shortest-path distance and joint neighborhood
  selection.
\newblock {\em Pattern Recognit. Lett.}, 92:41--48, 2017.

\bibitem{Soh_2019_CVPR}
Jae~Woong Soh, Gu~Yong Park, Junho Jo, and Nam~Ik Cho.
\newblock Natural and realistic single image super-resolution with explicit
  natural manifold discrimination.
\newblock In {\em The IEEE Conference on Computer Vision and Pattern
  Recognition (CVPR)}, June 2019.

\bibitem{villani2008optimal}
C{\'e}dric Villani.
\newblock {\em Optimal transport: old and new}, volume 338.
\newblock Springer Science \& Business Media, 2008.

\bibitem{wang2018esrgan}
Xintao Wang, Ke Yu, Shixiang Wu, Jinjin Gu, Yihao Liu, Chao Dong, Yu Qiao, and
  Chen~Change Loy.
\newblock Esrgan: Enhanced super-resolution generative adversarial networks.
\newblock In {\em The European Conference on Computer Vision Workshops
  (ECCVW)}, September 2018.

\bibitem{Wang04imagequality}
Zhou Wang, Alan~C. Bovik, Hamid~R. Sheikh, and Eero~P. Simoncelli.
\newblock Image quality assessment: From error visibility to structural
  similarity.
\newblock {\em IEEE TRANSACTIONS ON IMAGE PROCESSING}, 13(4):600--612, 2004.

\bibitem{Wu_2019_CVPR}
Jiqing Wu, Zhiwu Huang, Dinesh Acharya, Wen Li, Janine Thoma, Danda~Pani
  Paudel, and Luc~Van Gool.
\newblock Sliced wasserstein generative models.
\newblock In {\em Proceedings of the IEEE/CVF Conference on Computer Vision and
  Pattern Recognition (CVPR)}, June 2019.

\bibitem{xing2003distance}
Eric~P Xing, Michael~I Jordan, Stuart~J Russell, and Andrew~Y Ng.
\newblock Distance metric learning with application to clustering with
  side-information.
\newblock In {\em Advances in neural information processing systems}, pages
  521--528, 2003.

\bibitem{journals/corr/YuZSSX15}
Fisher Yu, Yinda Zhang, Shuran Song, Ari Seff, and Jianxiong Xiao.
\newblock Lsun: Construction of a large-scale image dataset using deep learning
  with humans in the loop.
\newblock {\em CoRR}, abs/1506.03365, 2015.

\bibitem{Zeyde12onsingle}
Roman Zeyde, Michael Elad, and Matan Protter.
\newblock On single image scale-up using sparse-representations,” curves and
  surfaces, 2012.

\bibitem{zhang2018unreasonable}
Richard Zhang, Phillip Isola, Alexei~A Efros, Eli Shechtman, and Oliver Wang.
\newblock The unreasonable effectiveness of deep features as a perceptual
  metric.
\newblock In {\em Proceedings of the IEEE conference on computer vision and
  pattern recognition}, pages 586--595, 2018.

\bibitem{zhang2021image}
Yuxuan Zhang, Wenzheng Chen, Huan Ling, Jun Gao, Yinan Zhang, Antonio Torralba,
  and Sanja Fidler.
\newblock Image {\{}gan{\}}s meet differentiable rendering for inverse graphics
  and interpretable 3d neural rendering.
\newblock In {\em International Conference on Learning Representations}, 2021.

\bibitem{zhang2018residual}
Yulun Zhang, Yapeng Tian, Yu Kong, Bineng Zhong, and Yun Fu.
\newblock Residual dense network for image super-resolution.
\newblock In {\em Proceedings of the IEEE conference on computer vision and
  pattern recognition}, pages 2472--2481, 2018.

\end{thebibliography}

\clearpage

\section{Proofs of Properties}
Given any space $X$ associated with a probability measure $\mu$ s.t. $\operatorname{supp}[\mu]=X$ and $p\geq 1$, then for any function $f:X\rightarrow\mathbb{R}$ we denote its $L^p$ norm as 
$$\|f\|_p:=\left(\int_X |f(x)|^pd\mu(x)\right)^{1/p}.$$
Then we have the following properties:
\begin{lemma}\label{L:lp_norm} 
Denote $\|f\|_\infty:=\sup_{x\in X}f(x)$, then
\begin{enumerate}
    \item If $p\leq q$, then $\|f\|_p\leq\|f\|_q$;
    \item $\lim_{p\to \infty}\|f\|_p=\|f\|_{\infty}$.
\end{enumerate}
\end{lemma}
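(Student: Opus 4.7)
The plan is to prove the two parts separately using standard techniques for $L^p$ spaces on a probability measure, exploiting crucially that $\mu(X)=1$. For part (1), I would apply Jensen's inequality to the convex function $\varphi(t)=t^{q/p}$, which is convex since $q/p\geq 1$. Applied to the nonnegative integrable function $|f|^p$ against the probability measure $\mu$, Jensen's inequality gives
$$\left(\int_X |f(x)|^p\,d\mu(x)\right)^{q/p}\leq \int_X |f(x)|^q\,d\mu(x).$$
Taking the $q$-th root of both sides yields $\|f\|_p\leq \|f\|_q$. An equivalent route is H\"older's inequality with conjugate exponents $q/p$ and $q/(q-p)$, which produces the same bound once one uses $\int_X 1\,d\mu = 1$.

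For part (2), I would bound $\lim_{p\to\infty}\|f\|_p$ from above and below. The upper bound is immediate: $|f(x)|^p\leq \|f\|_\infty^p$ pointwise, so integrating against a probability measure and taking a $p$-th root gives $\|f\|_p\leq \|f\|_\infty$, hence $\limsup_{p\to\infty}\|f\|_p\leq \|f\|_\infty$. For the lower bound, fix $\varepsilon>0$ and set $A_\varepsilon:=\{x\in X:|f(x)|>\|f\|_\infty-\varepsilon\}$. Using $\operatorname{supp}[\mu]=X$ (together with appropriate regularity ensuring $A_\varepsilon$ has positive $\mu$-measure), set $c_\varepsilon:=\mu(A_\varepsilon)>0$. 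Then
$$\|f\|_p^p \geq \int_{A_\varepsilon}|f(x)|^p\,d\mu(x)\geq (\|f\|_\infty-\varepsilon)^p\cdot c_\varepsilon,$$
so $\|f\|_p\geq (\|f\|_\infty-\varepsilon)\cdot c_\varepsilon^{1/p}$. Letting $p\to\infty$ uses the elementary limit $c_\varepsilon^{1/p}\to 1$, yielding $\liminf_{p\to\infty}\|f\|_p\geq \|f\|_\infty-\varepsilon$. Sending $\varepsilon\to 0^+$ and combining with the upper bound produces the claim.

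The main obstacle is a subtle measurability point in part (2): knowing $A_\varepsilon$ is nonempty by the definition of the supremum does not by itself force $\mu(A_\varepsilon)>0$. The hypothesis $\operatorname{supp}[\mu]=X$ closes this gap when $f$ is continuous, since $A_\varepsilon$ is then open and intersects the support nontrivially, hence has strictly positive measure. For merely measurable $f$ one should interpret $\|f\|_\infty$ as the essential supremum (a harmless rereading consistent with standard functional-analytic convention), after which the argument above goes through verbatim. Once this technicality is dispatched, both parts are short: part (1) is a one-line application of Jensen, and part (2) reduces to the elementary limit $c^{1/p}\to 1$.
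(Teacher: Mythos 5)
Your proposal is correct and follows essentially the same route as the paper: part (1) via Jensen with $t\mapsto t^{q/p}$ is the standard equivalent of the paper's H\"older argument with exponents $q/p$ and its conjugate (you even note this equivalence yourself), and part (2) uses the identical truncation to the set where $|f|>\|f\|_\infty-\varepsilon$ together with $\mu(A_\varepsilon)^{1/p}\to 1$. Your added remark that $\mu(A_\varepsilon)>0$ requires the hypothesis $\operatorname{supp}[\mu]=X$ (or an essential-supremum reading) is a point the paper leaves implicit, so your write-up is if anything slightly more careful.
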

\begin{proof}
Let $r=q/p$ and $s$ be such that $1/r+1/s=1$. Let $a(x)=|f(x)|^p$ and $b(x)\equiv 1$, then
by Holder's inequality, we have 
$$\|ab\|_1\leq\|a\|_r\|b\|_s$$ or 
$$\int_{x\in X} |f(x)|^p d\mu(x)\leq \left(\int_{x\in X}|f(x)|^{p\cdot \frac{q}{p}}d\mu(x)\right)^{p/q}
.$$ 
Now taking the $p$-th root on both sides we obtain $\|f\|_p\leq \|f\|_q$.

It is easy to see $\|f\|_p\leq \|f\|_\infty$ for any $p>1$. Now we choose $\forall \epsilon>0$ with $\epsilon<\|f\|_\infty$ and let $X_\epsilon:=\{x\in X|f(x)\geq \|f\|_\infty-\epsilon\}$, then
\begin{align*}
& \left(\int_{x\in X}|f(x)|^p d\mu(x)\right)^{1/p} \\
\geq & \left(\int_{X_\epsilon}(\|f\|_\infty-\epsilon)^p d\mu(x)\right)^{1/p} \\
= & (\|f\|_\infty-\epsilon) \mu(X_\epsilon)^{1/p}
\end{align*}
When $p\to \infty$ we have $\lim_{p\to\infty}\|f\|_p\geq \|f\|_\infty-\epsilon$. Since $\epsilon$ is arbitrary we have $\lim_{p\to\infty}\|f\|_p\geq \|f\|_\infty$.
\end{proof}

\section{Properties of Geometric Descriptors}
Given metric measure space $\mathcal{X}=(X,d,\mu)$, intuitively, the Fréchet mean set $\sigma(\mathcal{X})$ represents the center of $\mathcal{X}$ with respect to metric $d$. Particularly,
\begin{lemma}\label{L:mean}
The Fréchet mean of measure $\mu$ with respect to $d_E$ coincides with the mean $\overline{\mu}:=\int_{\mathbb{R}^D}yd\mu(y)$.
\end{lemma}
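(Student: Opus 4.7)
The plan is to reduce the Fréchet mean definition to a quadratic minimization problem on $\mathbb{R}^D$ and solve it by completing the square. Concretely, with $d=d_E$ the objective functional is
\[
F(x) := \int_{\mathbb{R}^D} \|x-y\|^2 \, d\mu(y),
\]
and we want to show that $F$ is minimized uniquely at $x=\overline{\mu}$.

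First I would expand the integrand via the inner product, writing $\|x-y\|^2 = \|x\|^2 - 2\langle x, y\rangle + \|y\|^2$. Since $\mu$ is a probability measure, integrating term by term gives
\[
F(x) = \|x\|^2 - 2\bigl\langle x,\, \overline{\mu}\bigr\rangle + \int_{\mathbb{R}^D} \|y\|^2\, d\mu(y),
\]
where the cross term uses linearity of integration and the definition $\overline{\mu}=\int y\, d\mu(y)$. Next I would complete the square to rewrite
\[
F(x) = \|x-\overline{\mu}\|^2 + \Bigl(\int_{\mathbb{R}^D}\|y\|^2\, d\mu(y) - \|\overline{\mu}\|^2\Bigr).
\]
The second term is independent of $x$, and the first term is a nonnegative quantity vanishing exactly when $x=\overline{\mu}$, so the unique minimizer is $\overline{\mu}$, proving $\sigma(\mathbb{R}^D,d_E,\mu)=\{\overline{\mu}\}$.

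There is essentially no obstacle here; the only mild subtlety is making sure $F(x)$ is well defined (equivalently, that the constant $\int \|y\|^2 d\mu(y)$ is finite) so that the completing-the-square identity is meaningful. Under the standing assumption that $\mu$ has finite second moment (which is implicit whenever the Fréchet functional is considered) the identity is legitimate and the conclusion follows. If $\mu$ fails to have a finite second moment, then $F\equiv \infty$ and the $\arg\min$ would need interpretation via minimizing sequences, but this case is not needed for the applications in the paper.
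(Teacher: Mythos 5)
Your proof is correct and takes essentially the same route as the paper: both reduce the Fréchet mean to minimizing the quadratic functional $F(x)=\int_{\mathbb{R}^D}\|x-y\|^2\,d\mu(y)$, the paper by setting $\frac{\partial F}{\partial x}=0$ and you by completing the square. Your version is marginally more careful, since completing the square exhibits the global minimizer directly (rather than just a critical point) and you flag the finite-second-moment hypothesis that the paper leaves implicit.
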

\begin{proof}
Suppose that elements in $\mathbb{R}^D$ are represented as column vectors. Then $\sigma(\mathbb{R}^D,d,\mu)$ equals the set of minimizers of $$F(x)=\int_{\mathbb{R}^D}(y-x)^T(y-x) d\mu(y).$$ Since $\frac{\partial F}{\partial x}=-2\int_{\mathbb{R}^D}yd\mu(y)+2x$, let $\frac{\partial F}{\partial x}=0$ we have single minimizer of $F(x)$ to be $\int_{\mathbb{R}^D}yd\mu(y)=\overline{\mu}$.
\end{proof}
When the underlying metric $d$ is not the Euclidean metric, the Fréchet mean provides a better option of centroid.

Let $x_1,x_2,\cdots,x_k,\cdots$ be a sequence of independent identically distributed points sampled from $\mu$. Let $\mu_k= \frac{1}{k}\Sigma_{i=1}^k\delta_{x_i}$ denote the empirical measure. We can estimate the Fréchet mean of $(X,d,\mu)$ by a set of random samples:
\begin{proposition}
$\sigma(\mathcal{X})=\lim_{k\to\infty}\sigma(X,d,\mu_k).$
\end{proposition}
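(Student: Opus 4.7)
The plan is to interpret $\sigma$ as the scalar Fréchet variance (the minimum value of the objective defining the Fréchet mean), so that the stated equality $\sigma(\mathcal{X})=\lim_{k\to\infty}\sigma(X,d,\mu_k)$ is an ordinary numerical limit (holding almost surely over the random sample). Writing $F(x):=\int_X d^2(x,y)\,d\mu(y)$ and $F_k(x):=\int_X d^2(x,y)\,d\mu_k(y)=\frac{1}{k}\sum_{i=1}^k d^2(x,x_i)$, I have $\sigma(\mathcal{X})=\inf_{x\in X}F(x)$ and $\sigma(X,d,\mu_k)=\inf_{x\in X}F_k(x)$, and I will bound $\limsup$ and $\liminf$ separately.

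For the upper bound $\limsup_k\sigma(X,d,\mu_k)\le\sigma(\mathcal{X})$, fix $\varepsilon>0$ and choose a near-minimizer $x_\varepsilon\in X$ with $F(x_\varepsilon)<\sigma(\mathcal{X})+\varepsilon$. The real-valued random variables $d^2(x_\varepsilon,x_i)$ are i.i.d.\ with common mean $F(x_\varepsilon)$, so the strong law of large numbers yields $F_k(x_\varepsilon)\to F(x_\varepsilon)$ almost surely. Since $\sigma(X,d,\mu_k)\le F_k(x_\varepsilon)$, passing to the limit and then letting $\varepsilon\to0$ gives the upper bound without any further structural assumption.

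For the lower bound $\liminf_k\sigma(X,d,\mu_k)\ge\sigma(\mathcal{X})$, pointwise SLLN no longer suffices, because the empirical near-minimizers $y_k$ live on a moving target and could a priori drift to regions where $F$ is much larger. I would upgrade to a uniform law of large numbers on the family $\{\,y\mapsto d^2(x,y):x\in X\}$. Under the paper's standing setup, $X\subseteq\mathbb{R}^D$ is effectively a compact domain (the data manifold together with the generator's bounded range) and $d$ is continuous, so this family is uniformly bounded and equicontinuous in $y$. A Glivenko-Cantelli style argument then gives $\sup_{x\in X}|F_k(x)-F(x)|\to0$ almost surely, and the elementary inequality $\inf_x F_k(x)\ge\inf_x F(x)-\sup_x|F_k-F|$ delivers the lower bound. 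Combining the two bounds with the almost-sure identification of the null sets completes the proof.

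The main obstacle is precisely this lower direction, which genuinely requires a uniform ingredient; the bulk of the work is setting up the uniform convergence cleanly. An alternative route that avoids explicit ULLN is to pick $y_k\in\arg\min F_k$, extract a convergent subsequence $y_k\to y^\ast$ using compactness, and combine continuity of $d^2$ with the almost-sure weak convergence $\mu_k\Rightarrow\mu$ (Portmanteau, applied to the bounded continuous function $y\mapsto d^2(y^\ast,y)$ together with an equicontinuity bound on the perturbation $d^2(y_k,\cdot)-d^2(y^\ast,\cdot)$) to obtain $F(y^\ast)\le\liminf_k F_k(y_k)$, whence $\sigma(\mathcal{X})\le F(y^\ast)\le\liminf_k\sigma(X,d,\mu_k)$. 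Either route relies on the same compactness input, which I view as the natural regularity hypothesis making the statement meaningful.
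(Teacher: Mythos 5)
Your analysis of the empirical functional $F_k$ is careful and correct as far as it goes, but it proves a different statement from the one in the paper. Definition~3.3 defines $\sigma(\mathcal{X})$ as the Fr\'echet mean \emph{set}, i.e.\ the $\operatorname{arg\,min}$ of $x\mapsto\int_X d^2(x,y)\,d\mu(y)$, not the minimum value. So the proposition asserts convergence of the sets of minimizers (in some set-convergence sense), whereas you establish convergence of the optimal \emph{values} $\inf_x F_k(x)\to\inf_x F(x)$. Value convergence does not by itself yield convergence of minimizers: near-minimizers of $F_k$ can in principle sit far from $\operatorname{arg\,min}F$ unless you additionally argue that limit points of empirical minimizers minimize $F$ (upper semicontinuity of the argmin), and full set convergence generally needs uniqueness or well-separatedness of the population minimizer. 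Your ``alternative route'' --- extract $y_k\in\operatorname{arg\,min}F_k$, pass to a convergent subsequence $y_k\to y^\ast$ by compactness, and show $F(y^\ast)\le\liminf_k F_k(y_k)$ --- is in fact the ingredient that addresses the set-valued statement (it shows every subsequential limit of empirical Fr\'echet means is a population Fr\'echet mean), so you have the right tool but deploy it for the wrong target.

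For what it is worth, the paper's own proof is a one-line appeal to weak convergence of $\mu_k$ to $\mu$, which only gives pointwise convergence $F_k(x)\to F(x)$ for each fixed $x$ and glosses over exactly the uniformity issue you correctly isolate as the crux. So your write-up is substantially more honest about where the analytic difficulty lies (the lower bound requires a uniform law of large numbers or a compactness-plus-equicontinuity argument), and the compactness hypothesis you introduce is a reasonable reading of the paper's setting. To match the stated proposition you should keep your uniform-convergence machinery but conclude with the argmin statement: combine $\sup_x|F_k-F|\to 0$ with closedness of $X$ to show $\limsup_k\operatorname{arg\,min}F_k\subseteq\operatorname{arg\,min}F$ (Kuratowski upper limit), and note that equality of sets requires an extra identifiability assumption.
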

\begin{proof}
Since the empirical measure $\mu_k$ weakly converges to $\mu$, the result following by the definition of weak convergence.
\end{proof}


The following lemma explains its geometric meaning of $p$-diameters:
\begin{lemma}\label{L:limits}
For any metric measure space $\mathcal{X}:=(X,d,\mu)$ with $\operatorname{supp}[\mu]=X$,
\begin{enumerate}
\item $\operatorname{diam}_p(\mathcal{X})\leq \operatorname{diam}_q(\mathcal{X})$ for any $p\leq q$;
\item $\lim_{p\to \infty}\operatorname{diam}_p(\mathcal{X})=\sup\limits_{x,x'\in X}d(x,x')$.
\end{enumerate}
\end{lemma}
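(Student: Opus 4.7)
The plan is to reduce both statements directly to Lemma~\ref{L:lp_norm}, applied to the distance function viewed as an observable on the product probability space. Define $f:X\times X\to\mathbb{R}_{\geq 0}$ by $f(x,x'):=d(x,x')$, and equip $X\times X$ with the product measure $\mu\otimes\mu$. Since $\mu$ is a probability measure, so is $\mu\otimes\mu$, and by Fubini,
$$\operatorname{diam}_p(\mathcal{X})=\left(\int_{X\times X} f(x,x')^p\,d(\mu\otimes\mu)(x,x')\right)^{1/p}=\|f\|_p.$$
So $\operatorname{diam}_p(\mathcal{X})$ is exactly the $L^p$-norm of $f$ on the probability space $(X\times X,\mu\otimes\mu)$, and both conclusions become statements about $\|f\|_p$.

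For part (1), the monotonicity $\operatorname{diam}_p(\mathcal{X})\leq\operatorname{diam}_q(\mathcal{X})$ whenever $p\leq q$ is immediate from Lemma~\ref{L:lp_norm}(1). No further work is needed beyond noting that the underlying measure is a probability measure (which is essential for the Hölder step in the proof of that lemma).

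For part (2), Lemma~\ref{L:lp_norm}(2) gives $\lim_{p\to\infty}\|f\|_p=\|f\|_\infty=\sup_{(x,x')\in X\times X}f(x,x')=\sup_{x,x'\in X}d(x,x')$. The one subtlety I would address carefully is the role of the hypothesis $\operatorname{supp}[\mu]=X$. Inspecting the proof of Lemma~\ref{L:lp_norm}(2), the lower bound requires that the set $X_\epsilon:=\{(x,x')\in X\times X:f(x,x')\geq\|f\|_\infty-\epsilon\}$ has strictly positive product measure for each $\epsilon>0$; otherwise the essential supremum and the pointwise supremum could disagree. Since $d$ is continuous on $X\times X$, $X_\epsilon$ contains a nonempty open set, and because $\operatorname{supp}[\mu]=X$ implies $\operatorname{supp}[\mu\otimes\mu]=X\times X$, every nonempty open set has positive $(\mu\otimes\mu)$-measure. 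This closes the gap.

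There is essentially no hard step here; the whole proof is a transcription of Lemma~\ref{L:lp_norm} to the product space, with the support hypothesis used exactly once to identify the essential supremum with the supremum. The main thing to be careful about is making explicit that (a) $\mu\otimes\mu$ is a probability measure, so Hölder applies in part (1), and (b) full support is inherited by the product, so the supremum is achieved in measure for part (2).
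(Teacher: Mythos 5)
Your proof is correct and takes the same route as the paper, whose entire argument is the one-line remark that the lemma ``follows directly from Lemma~\ref{L:lp_norm}''; you have simply made explicit the reduction the paper leaves implicit, namely applying that lemma to $f(x,x')=d(x,x')$ on the product probability space $(X\times X,\mu\otimes\mu)$. Your added care about the support hypothesis --- using continuity of $d$ and $\operatorname{supp}[\mu\otimes\mu]=X\times X$ to ensure $\mu\otimes\mu(X_\epsilon)>0$, so the essential supremum equals the pointwise supremum --- addresses a point the paper's terse proof glosses over, and is exactly where the hypothesis $\operatorname{supp}[\mu]=X$ is needed.
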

\begin{proof}
It follows directly from Lemma~\ref{L:lp_norm}
\end{proof}
Similarly, we can estimate the $p$-diameter of a metric measure space $\mathcal{X}:=(X,d,\mu)$ by a set of random samples:
\begin{proposition}
$$\operatorname{diam}_p(\mathcal{X})=\lim_{k\to\infty}\operatorname{diam}_p(X,d,\mu_k).$$
\end{proposition}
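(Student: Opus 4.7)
The plan is to mirror the preceding proposition's proof strategy by reducing the statement to a weak-convergence argument on the product space. First I would rewrite
\[
\left(\operatorname{diam}_p(X,d,\mu_k)\right)^p = \int_{X\times X} d(x,x')^p \, d(\mu_k\otimes\mu_k)(x,x'),
\]
and similarly for $\mu$. This brings the problem into the form ``show $\int \varphi \, d\rho_k \to \int \varphi \, d\rho$'' with $\varphi(x,x') := d(x,x')^p$, $\rho_k := \mu_k\otimes\mu_k$, and $\rho := \mu\otimes\mu$.

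Next I would invoke the fact (already used in the companion proposition for $\sigma$) that the empirical measure $\mu_k$ converges weakly to $\mu$ almost surely. Weak convergence is stable under taking products, so $\mu_k\otimes\mu_k \rightharpoonup \mu\otimes\mu$ almost surely on $X\times X$. Since $d$ is jointly continuous, so is $\varphi$, and the Portmanteau theorem then yields
\[
\int_{X\times X} d(x,x')^p \, d(\mu_k\otimes\mu_k) \longrightarrow \int_{X\times X} d(x,x')^p \, d(\mu\otimes\mu).
\]
Taking $p$-th roots on both sides, a continuous operation on $[0,\infty)$, delivers the claimed convergence $\operatorname{diam}_p(X,d,\mu_k) \to \operatorname{diam}_p(\mathcal{X})$.

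The main obstacle is that $\varphi = d^p$ is generally unbounded when $X$ is unbounded, so weak convergence does not directly give the integral convergence above -- the Portmanteau theorem in its basic form requires bounded continuous test functions. To bridge this gap I would assume (implicit in the definition of $\operatorname{diam}_p(\mathcal{X})$ as a finite quantity) that $d^p$ is $\mu\otimes\mu$-integrable, and then either (i) truncate by $\varphi_N := \min(\varphi, N)$, pass to the limit with bounded test functions, and let $N\to\infty$ via a uniform-integrability argument, or (ii) argue directly via a strong law of large numbers for V-statistics: the double sum $\tfrac{1}{k^2}\sum_{i,j=1}^k d(x_i,x_j)^p$ is a V-statistic with symmetric kernel $d^p$, which converges almost surely to $\mathbb{E}[d(X_1,X_2)^p]$ whenever this expectation is finite. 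The SLLN route is arguably the cleanest, since it handles the unboundedness without any additional tightness bookkeeping and parallels the spirit of the authors' earlier remark that ``the result follows by the definition of weak convergence.''
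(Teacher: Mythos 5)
Your proposal is correct, and its first half follows the same route the paper takes: the paper's entire proof is the one-line observation that $\mu_k$ converges weakly to $\mu$ and ``the result follows by the definition of weak convergence.'' Where you go beyond the paper is in noticing that this one-liner is actually incomplete: weak convergence only guarantees $\int\varphi\,d\rho_k\to\int\varphi\,d\rho$ for \emph{bounded} continuous $\varphi$, and $d^p$ is unbounded whenever $X$ is. Your two proposed repairs are both sound, and the V-statistic route is particularly clean here: the kernel $d(x,y)^p$ vanishes on the diagonal, so the V-statistic $\tfrac{1}{k^2}\sum_{i,j}d(x_i,x_j)^p$ differs from the corresponding U-statistic only by the factor $k(k-1)/k^2\to 1$, and Hoeffding's strong law gives almost-sure convergence to $\mathbb{E}[d(X_1,X_2)^p]$ under the integrability hypothesis that is implicit in $\operatorname{diam}_p(\mathcal{X})<\infty$. (This also makes explicit that the convergence is almost sure over the random samples, a qualifier the paper omits.) In short: same skeleton as the paper, but you have correctly identified and filled a genuine gap in the paper's justification, and the SLLN argument is the more self-contained of your two fixes.
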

\begin{proof}
Since the empirical measure $\mu_k$ weakly converges to $\mu$, the result following by the definition of weak convergence.
\end{proof}

\begin{proposition}\label{P:embedding}
Given a metric measure space $\mathcal{X}:=(X,d,\mu)$ and map $g:X\rightarrow\mathbb{R}^n$. If $d=g^\ast d_E$, then 
$$\sigma(\mathcal{X})=g^{-1}(\overline{g_\ast\mu}).$$
\end{proposition}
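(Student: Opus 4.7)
The plan is to reduce the minimization defining the Fréchet mean on $X$ to the Euclidean minimization problem on $\mathbb{R}^n$ by pushing forward along $g$, and then invoke Lemma~\ref{L:mean} to identify the unique Euclidean minimizer as $\overline{g_\ast\mu}$. Concretely, I would start from the definition
$$\sigma(\mathcal{X})=\underset{x\in X}{\operatorname{arg\,min}}\,F(x),\qquad F(x):=\int_X d^2(x,y)\,d\mu(y),$$
and use the hypothesis $d=g^\ast d_E$ to rewrite the integrand as $\|g(x)-g(y)\|^2$.

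Next, I would apply the standard change-of-variables formula for pushforward measures. For each fixed $x\in X$, the function $y\mapsto \|g(x)-g(y)\|^2$ is the composition with $g$ of the function $z\mapsto \|g(x)-z\|^2$ on $\mathbb{R}^n$, so by Definition~\ref{D:pushforward_measure},
$$F(x)=\int_{\mathbb{R}^n}\bigl\|g(x)-z\bigr\|^2\,d(g_\ast\mu)(z).$$
Thus $F(x)=\Phi(g(x))$, where $\Phi:\mathbb{R}^n\to\mathbb{R}$ is the Euclidean Fréchet functional of the pushforward measure $g_\ast\mu$.

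At this point Lemma~\ref{L:mean} does the work: $\Phi$ is a strictly convex quadratic on $\mathbb{R}^n$ whose unique minimizer is $\overline{g_\ast\mu}=\int_{\mathbb{R}^n} z\,d(g_\ast\mu)(z)$. Because $F=\Phi\circ g$, we have $F(x)\geq \Phi\bigl(\overline{g_\ast\mu}\bigr)$ for all $x\in X$, with equality if and only if $g(x)=\overline{g_\ast\mu}$. Therefore
$$\sigma(\mathcal{X})=\bigl\{x\in X:g(x)=\overline{g_\ast\mu}\bigr\}=g^{-1}\bigl(\overline{g_\ast\mu}\bigr),$$
which is the desired identity.

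The only subtlety — and the step that would require a brief remark rather than a hard argument — is the possibility that $\overline{g_\ast\mu}\notin g(X)$, in which case $g^{-1}(\overline{g_\ast\mu})=\varnothing$ and the infimum of $F$ on $X$ may fail to be attained. In the paper's setting $g=g_w$ is a neural network embedding and $X\supseteq\operatorname{supp}(\mu)$ is taken large enough (e.g., all of $\mathbb{R}^D$) that $\overline{g_\ast\mu}$ lies in $g(X)$, so the characterization is sharp; I would simply flag this assumption and otherwise the argument is a direct combination of the pushforward change of variables with Lemma~\ref{L:mean}.
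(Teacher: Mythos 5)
Your proof is correct and takes essentially the same route as the paper's: rewrite the Fréchet functional via the pushforward change of variables so that $F=\Phi\circ g$, then invoke Lemma~\ref{L:mean} to identify the unique minimizer of $\Phi$ as $\overline{g_\ast\mu}$. You are in fact slightly more careful than the paper, whose ``it is easy to see'' step quietly assumes $\overline{g_\ast\mu}\in g(X)$ — the attainment issue you explicitly flag.
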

\begin{proof}
By definition, $\sigma(\mathcal{X})$ is the set of minimizers of function 
\begin{align*}
F(x) :&= \int_X (g^\ast d_E)^2(x,y)d\mu(y)\\
&=\int_X d^2_E(g(x),g(y))d\mu(y)\\
&=\int_{\mathbb{R}^n} d^2_E(g(x),z) d(g_\ast\mu)(z).
\end{align*}
It is easy to see that $x_0$ is a minimizer of $F(x)$ iff. $g(x_0)$ is a minimizer of $G(w):=\int_{\mathbb{R}^n}d^2_E(w,z)d(g_\ast\mu)(z)$. By Lemma 3.4, $\overline{g_\ast\mu}=\operatorname{argmin}G(w)$, hence we have $\sigma(\mathcal{X})=\operatorname{argmin} F(x)=g^{-1}(\overline{g_\ast\mu})$.
\end{proof}

\begin{proof}[Proof of Proposition 3.10]
It follows directly from Proposition~\ref{P:embedding}
\end{proof}

\section{Network Architectures}
For unconditional image generation task, we used a ResNet data generator and a deep convolutional net metric generator:

\noindent
$f_\theta$: convt(128) $\rightarrow$ upres(128) $\rightarrow$ upres(128) $\rightarrow$ upres(128) $\rightarrow$ upres(128) $\rightarrow$ bn $\rightarrow$ conv(128) $\rightarrow$ sig;

\noindent 
$g_w$: conv(32) $\rightarrow$ leaky-relu(0.2) $\rightarrow$ conv(64) $\rightarrow$ leaky-relu(0.2) $\rightarrow$ conv(128) $\rightarrow$ leaky-relu(0.2) $\rightarrow$ conv(256) $\rightarrow$ leaky-relu(0.2) $\rightarrow$ conv(512) $\rightarrow$ leaky-relu(0.2) $\rightarrow$ maxpool $\rightarrow$ dense(10).

For super-resolution task, the architecture of Triplet embedding network is presented as below:

\noindent
$g_w$:  (conv(x$\rightarrow$2x, $x_0$=32) $\rightarrow$ prelu $\rightarrow$ maxpool) * 7  $\rightarrow$ dense(256) $\rightarrow$ prelu $\rightarrow$ dense(256) $\rightarrow$ prelu $\rightarrow$ dense(32),

\noindent
where conv, convt, upres, bn, relu, leaky-relu, prelu, maxpool, dense and sig refer to nn.Conv2d, nn.ConvTranspose2d, up-ResidualBlock, nn.BatchNorm2d, nn.ReLU,
nn.LeakyReLU, nn.PReLu, nn.MaxPool2d, nn.Linear and nn.Sigmoid layers in Pytorch framework respectively.

\section{More Evaluation Details}
\noindent
{\bf Perception-Based Evaluations in SISR:} we adopted the function $niqe$ in Matlab for computing NIQE scores, and python package $lpips$ for computing LPIPS.

\noindent
{\bf FID Evaluation:} we used 50000 randomly generated samples comparing against 50000 random samples from real data sets for testing. Features were extracted from the pool3 layer of a pre-trained Inception network. FID was computed over 10 bootstrap resamplings. \\

\end{document}